%% file: Main.tex
\documentclass{article} 

\usepackage{rjm,times}

\input{math_commands.tex}

\usepackage{amsmath,amssymb}
\usepackage{amsthm}

\newtheorem{theorem}{Theorem}[section]
\newtheorem{proposition}[theorem]{Proposition}
\newtheorem{corollary}[theorem]{Corollary}

\usepackage{booktabs}
\usepackage{multirow}
\usepackage{array}
\usepackage{xcolor}
\usepackage{colortbl}
\usepackage{graphicx}
\usepackage{url}
\usepackage{hyperref}
\usepackage{float}

\usepackage[ruled,vlined,linesnumbered]{algorithm2e}
\SetAlgoNlRelativeSize{-1}
\DontPrintSemicolon

\definecolor{oursbg}{RGB}{230,245,236}

\newcommand{\best}[1]{\mathbf{#1}}
\newcommand{\second}[1]{\underline{#1}}

\title{
AlphaRJM: Reward-Jump Memory for\\
Stochastic Return-Guided Alpha Discovery
}

\author{
Sayan Dhan
\qquad\qquad
Selvaraju Natarajan
\\[5pt]
Department of Mathematics\\
Indian Institute of Technology Guwahati\\
[3pt]
\texttt{s.dhan@iitg.ac.in}
\qquad
\texttt{nselvaraju@iitg.ac.in}
}

\iclrfinalcopy

\begin{document}

\maketitle

\fancyhead{}
\renewcommand{\headrulewidth}{0pt}

\begin{abstract}
Formulaic alpha discovery is a pool-dependent symbolic search problem in
which informative feedback is observed primarily when a complete expression
is evaluated. This delayed feedback creates two coupled difficulties: the
retained alpha pool does not preserve the full history of realized evaluation
feedback, and the value of an intermediate construction action is uncertain
because its consequence depends on the formula eventually completed. We
introduce \textbf{AlphaRJM}, which addresses these difficulties through
\textbf{Reward-Jump Memory}, an event-driven latent state that remains fixed
during token construction and updates only at terminal evaluation events
using the realized pool reward and evaluation outcome, and an
\textbf{action-conditioned SDE return critic} that represents future
discounted discovery returns with stochastic particles. The particles guide
action selection through their mean and uncertainty and are learned using a
distributional Bellman objective combining energy-distance matching, mean
calibration, and jump regularization. Empirically, AlphaRJM delivers strong
and stable gains across multiple equity universes, forecasting horizons, and
random seeds, while ablations confirm the complementary roles of persistent
evaluation history, stochastic return modeling, and distributional
supervision.
\end{abstract}

\section{Introduction}
\label{sec:introduction}

Formulaic alphas are interpretable symbolic expressions that transform
historical market variables into predictive signals. Automating their
discovery is challenging because the space of candidate expressions is
combinatorial and, more importantly, the usefulness of a candidate depends
on the alpha pool already constructed. A moderately predictive formula that
contributes complementary information may be more valuable than a stronger
but redundant one. AlphaGen formalizes this interaction by rewarding a
candidate according to its marginal contribution to the combined alpha
ensemble~\citep{yu2023generating}, turning formula discovery into a
pool-dependent sequential decision problem with sparse and evolving
feedback.

Recent methods improve different aspects of this search process. AlphaQCM
introduces distributional reinforcement learning and uncertainty-aware
exploration through learned return quantiles and Quantile Conditional Moments
(QCM)~\citep{zhu2025alphaqcm,zhang2026quantiled}. AlphaSAGE instead
strengthens expression representation and search diversity through
structure-aware graph encoding, Generative Flow Networks (GFlowNets), and
dense multi-faceted rewards~\citep{chen2026alphasage}. These advances improve
uncertainty modeling, structural understanding, and exploration, but they do
not explicitly maintain a persistent representation of the sequence of
terminal rewards and evaluation outcomes produced during search.

This distinction matters because terminal feedback can be informative even
when the retained pool remains unchanged. For example, a rejected candidate
provides a different evaluation outcome from an accepted or replaced one,
while an invalid construction produces another distinct terminal event.
Consequently, two search trajectories may arrive at the same retained pool
after different sequences of realized rewards and categorical outcomes.
Conditioning subsequent decisions only on the current pool therefore discards
part of the evaluation history accumulated during search. This motivates a
persistent state that evolves with terminal evaluation feedback while
remaining unchanged during intermediate token construction.

Delayed terminal evaluation creates a second, closely related difficulty.
At an intermediate token step, the consequence of selecting an action is not
observed immediately: its utility depends on the expression that will
eventually be completed and on the pool against which that expression will be
evaluated. The future discovery return associated with an action is therefore
naturally uncertain rather than a single deterministic quantity. A
distributional representation can preserve this uncertainty and provide both
a central estimate and a dispersion signal for action selection.

To address these two coupled challenges, we propose
AlphaRJM, a formulaic alpha-discovery framework that combines
Reward-Jump Memory with an
action-conditioned stochastic differential equation (SDE) return
critic. Reward-Jump Memory is an event-driven persistent latent state that
remains unchanged during token-level formula construction and updates only
when a terminal evaluation event occurs. Each update is driven by the
realized pool reward and categorical evaluation outcome, allowing this
evaluation history to persist across formula boundaries independently of
whether the retained pool changes.

Complementing this persistent history, the action-conditioned SDE critic
models the uncertain future discovery return of each candidate construction
action. Conditioned on the encoded partial expression, encoded alpha pool,
persistent memory, and candidate action, the critic defines a
state--action-specific stochastic return process. Independent trajectories
generate particles representing plausible future discounted discovery
returns, whose empirical mean and uncertainty directly guide action
selection. The SDE is used solely as an internal stochastic return critic for
symbolic search rather than as a model of asset prices or alpha signals. Its
particle distribution is trained through a distributional Bellman objective
combining energy-distance matching, mean calibration, and jump
regularization.

Empirically, AlphaRJM achieves strong and stable gains across multiple equity
universes, forecasting horizons, and random seeds, while controlled ablations
support the complementary roles of persistent evaluation history, stochastic
return modeling, and distributional supervision.

\paragraph{Contributions.}
Our main contributions are:
\begin{itemize}

    \item We introduce \textbf{Reward-Jump Memory}, an event-driven
    persistent state that updates only after terminal evaluations using the
    realized pool reward and categorical outcome, allowing evaluation history
    to persist across formula episodes.

    \item We develop an \textbf{action-conditioned SDE return critic} that
    represents uncertain future discovery returns with stochastic particles
    and uses their mean and uncertainty to guide symbolic action selection.

    \item We train the resulting framework through
    \textbf{distributional Bellman learning} with sequence-consistent memory
    replay, and demonstrate its effectiveness across multiple equity
    universes, forecasting horizons, and random seeds.

\end{itemize}

\section{Background and Related Work}
\label{sec:background}

Formulaic alpha discovery searches for interpretable symbolic expressions
whose usefulness depends on the existing alpha pool. AlphaGen formalizes this
interaction by rewarding a candidate according to its marginal contribution
to the combined ensemble~\citep{yu2023generating}. AlphaRJM adopts this
pool-dependent setting while additionally maintaining a persistent
representation of realized terminal evaluation feedback across successive
formula episodes.

Distributional reinforcement learning models the distribution of discounted
returns rather than only their expectation
~\citep{bellemare2017distributional,dabney2018implicit}. AlphaQCM applies
this principle to formulaic alpha discovery through learned return quantiles
and Quantile Conditional Moments for uncertainty-aware exploration
~\citep{zhu2025alphaqcm,zhang2026quantiled}. AlphaRJM retains the
distributional perspective but conditions future returns on the current
expression, alpha pool, persistent evaluation history, and candidate action,
and represents the resulting return law using particles generated by an
action-conditioned SDE.

A complementary line of work improves symbolic representation and
exploration. AlphaSAGE combines structure-aware relational graph
representations with Generative Flow Networks (GFlowNets) and dense
multi-faceted rewards to improve expression modeling and search diversity
~\citep{chen2026alphasage,schlichtkrull2018modeling,
bengio2021flow,malkin2022trajectory}. AlphaRJM addresses a different aspect
of the search process: retaining realized reward--outcome information across
formula boundaries and using it to condition subsequent distributional
action evaluation.

Neural SDEs provide flexible mechanisms for representing and sampling
stochastic dynamics~\citep{kidger2021neural}, while energy-distance and
proper-scoring objectives support sample-based distribution learning
~\citep{gneiting2007strictly,szekely2013energy}. In AlphaRJM, the SDE is used
solely as an internal stochastic return critic rather than as a model of asset
prices or alpha signals.

Overall, AlphaRJM complements prior work on pool-dependent rewards,
distributional exploration, and structure-aware symbolic generation by
coupling persistent terminal evaluation history with a sampleable
distribution of future discovery returns.

\section{Methodology}
\label{sec:methodology}

AlphaRJM augments pool-dependent symbolic alpha discovery with two coupled
components: an event-driven persistent memory that carries terminal
evaluation feedback across formula episodes, and an action-conditioned
stochastic return critic that represents uncertain future discovery returns.
We first formalize the symbolic search state and terminal-event structure,
then introduce Reward-Jump Memory, stochastic return-guided action selection,
and distributional Bellman learning.

\subsection{Framework and Search State}
\label{sec:framework}

We follow the grammar-based symbolic construction setting used in prior
formulaic alpha-discovery methods
~\citep{yu2023generating,zhu2025alphaqcm,chen2026alphasage}.
Let $t$ index primitive token-level decisions and let
$n\in\{0,\ldots,N-1\}$ denote the number of completed formula episodes
before the current decision, where $N$ is the total formula-generation
budget.

At step $t$, the search context contains a partial expression $x_t$ and
the current retained alpha pool $\mathcal P_t$. Let
$\mathcal A_t$ denote the set of actions exposed by the current symbolic
construction mask. An action
\begin{equation}
    a_t\in\mathcal A_t
\end{equation}
appends an operator, market feature, temporal token, constant, or terminal
symbol to the current expression. The exact grammar and construction-mask
protocol are given in Appendix~\ref{app:symbolic_search}.

Let $\delta_t\in\{0,1\}$ indicate whether action $a_t$ produces a terminal
formula evaluation. Thus, $\delta_t=0$ denotes an intermediate construction
step and $\delta_t=1$ terminates and evaluates the current formula episode.
In the reported environment every terminated formula attempt produces such
an evaluation event, so $\delta_t$ also serves as the formula-episode
termination indicator.

At terminal events, let $o_t$ denote the realized categorical evaluation
outcome, with
\begin{equation}
    o_t\in
    \mathcal O
    :=
    \{
    \mathrm{accepted},
    \mathrm{replaced},
    \mathrm{rejected},
    \mathrm{invalid}
    \}.
    \label{eq:outcome_set}
\end{equation}
The outcome is irrelevant when $\delta_t=0$.

Following the synergistic pool formulation of
AlphaGen~\citep{yu2023generating}, the reward is the marginal change in
training-set ensemble Information Coefficient (IC):
\begin{equation}
    r_t
    =
    \begin{cases}
    Q(\mathcal P_{t+1})-Q(\mathcal P_t),
        & \delta_t=1,\\
    0,
        & \delta_t=0,
    \end{cases}
    \label{eq:pool_reward}
\end{equation}
where $Q(\mathcal P)$ denotes the IC of the weighted alpha ensemble
represented by pool $\mathcal P$. Hence the same candidate may have different
utility under different retained pools.

Let $E_{\phi}$ denote the partial-expression encoder with parameters $\phi$
and $P_{\psi}$ the permutation-invariant pool encoder with parameters
$\psi$. Their outputs are
\begin{equation}
    e_t=E_{\phi}(x_t),
    \qquad
    p_t=P_{\psi}(\mathcal P_t),
    \label{eq:encodings}
\end{equation}
where $e_t$ represents the current expression and $p_t$ the retained pool.
AlphaRJM augments these observable representations with a persistent latent
memory $H_t$ and defines the search state
\begin{equation}
    s_t=(e_t,p_t,H_t).
    \label{eq:rjm_state}
\end{equation}
The two history-bearing components have distinct roles: $p_t$ summarizes
the formulas currently retained in the ensemble, whereas $H_t$ summarizes
the sequence of realized reward--outcome feedback from previous terminal
evaluations.

\subsection{Reward-Jump Memory}
\label{sec:rjm}

At the beginning of the search stream, the persistent state is initialized
from the initial expression and pool representations:
\begin{equation}
    H_0
    =
    \tanh\!\left(
        W_0[e_0;p_0]+b_0
    \right).
    \label{eq:rjm_init}
\end{equation}
Here $[\cdot\,;\cdot]$ denotes vector concatenation. The implementation
details of this one-time initialization are provided in
Appendix~\ref{app:rjm_details}.

For a terminal event, the realized reward and outcome are encoded into an
event mark
\begin{equation}
    \chi_t
    =
    \mathcal M_{\eta}(r_t,o_t),
    \label{eq:rjm_mark}
\end{equation}
where $\mathcal M_{\eta}$ is the learned event encoder. The event mark and
current memory are then transformed into a bounded candidate jump:
\begin{align}
    u_t
    &=
    \mathcal C_{\eta}([H_t;\chi_t]),
    \nonumber\\
    g_t
    &=
    \operatorname{sigmoid}(W_g u_t+b_g),
    \qquad
    \zeta_t
    =
    \tanh(W_{\zeta}u_t+b_{\zeta}),
    \nonumber\\
    J_t
    &=
    J_{\max}\,
    g_t\odot\zeta_t ,
    \label{eq:rjm_jump}
\end{align}
where $\mathcal C_{\eta}$ is the learned jump-context network,
$g_t$ is an element-wise gate, $\zeta_t$ is a signed direction vector,
$\odot$ denotes the Hadamard product, and $J_{\max}>0$ bounds the magnitude
of every jump coordinate. We use $\eta$ collectively for the learned
parameters of the event encoder, jump-context network, and jump heads.

The persistent state evolves according to
\begin{equation}
    H_{t+1}
    =
    \begin{cases}
        H_t+J_t, & \delta_t=1,\\
        H_t,     & \delta_t=0.
    \end{cases}
    \label{eq:rjm_update}
\end{equation}
Thus $H_t$ is exactly constant throughout intermediate token construction
and changes only after terminal evaluation feedback. Importantly, completion
of a formula ends its Bellman episode but not the memory stream:
the post-event state $H_{t+1}$ is retained when construction of the next
formula begins. This produces a fast token-construction timescale and a
slower event-driven memory timescale.

\subsection{Stochastic Return-Guided Alpha Discovery}
\label{sec:stochastic_return}

Reward-Jump Memory carries past evaluation feedback, while action selection
requires estimating the uncertain future consequence of each available
construction action. AlphaRJM represents this uncertainty through an
action-conditioned scalar SDE return critic.

For candidate action $a$, let $A_{\omega}(a)$ denote its learned action
embedding with parameters $\omega$. The critic conditioning vector is
\begin{equation}
    q_{t,a}
    =
    [e_t;p_t;H_t;A_{\omega}(a)].
    \label{eq:sde_context}
\end{equation}
A neural conditioning network maps $q_{t,a}$ to four scalars:
an initial return state $z_{0,t,a}$, a long-run level $\mu_{t,a}$,
a positive mean-reversion rate $\kappa_{t,a}$, and a bounded diffusion scale
$\sigma_{t,a}\in[\sigma_{\min},\sigma_{\max}]$, where
$0<\sigma_{\min}\leq\sigma_{\max}<\infty$.
For compactness, $\theta$ denotes the full online SDE-critic parameter
collection, including the action-embedding parameters $\omega$.

These quantities define the conditional return process
\begin{equation}
    dZ_{\tau}^{t,a}
    =
    \kappa_{t,a}
    \bigl(
        \mu_{t,a}-Z_{\tau}^{t,a}
    \bigr)\,d\tau
    +
    \sigma_{t,a}\,dB_{\tau},
    \qquad
    Z_0^{t,a}=z_{0,t,a},
    \label{eq:return_sde}
\end{equation}
where $\tau\in[0,1]$ is an internal diffusion coordinate and
$B_{\tau}$ is a standard one-dimensional Brownian motion. The scalar
mean-reverting form provides an inexpensive sampleable return law with
separately controlled location, reversion, and stochastic dispersion.
Importantly, $Z_{\tau}^{t,a}$ is not a stock-price process or an alpha
signal; it represents the stochastic discounted reinforcement-learning
return associated with choosing action $a$ in search state $s_t$.
Its conditional law is characterized in Appendix~\ref{app:theory}.

Simulating $M$ independent trajectories of
Eq.~\ref{eq:return_sde} and taking their terminal values at
$\tau=1$ yields the particle representation
\begin{equation}
    \mathcal Z_{\theta}(s_t,a)
    =
    \left\{
        Z_{t,a}^{(1)},
        \ldots,
        Z_{t,a}^{(M)}
    \right\},
    \label{eq:particles}
\end{equation}
where $Z_{t,a}^{(m)}$ denotes the terminal value of trajectory $m$.
The implementation uses Euler--Maruyama integration as detailed in
Appendix~\ref{app:sde_details}.

The empirical mean and variance of these particles are
\begin{equation}
    \overline Z_{t,a}
    =
    \frac{1}{M}
    \sum_{m=1}^{M}
    Z_{t,a}^{(m)},
    \qquad
    \widehat V_{t,a}
    =
    \frac{1}{M}
    \sum_{m=1}^{M}
    \left(
        Z_{t,a}^{(m)}
        -
        \overline Z_{t,a}
    \right)^2 .
    \label{eq:particle_statistics}
\end{equation}
AlphaRJM converts these statistics into the uncertainty-aware action score
\begin{equation}
    S_t(a)
    =
    \overline Z_{t,a}
    +
    c_n
    \sqrt{
        \widehat V_{t,a}
        +
        \varepsilon_{\mathrm{num}}
    },
    \label{eq:action_score}
\end{equation}
where $\varepsilon_{\mathrm{num}}>0$ is a numerical stabilizer and
$c_n\geq0$ is the uncertainty coefficient at formula episode $n$.
The coefficient decreases during search, placing greater emphasis on
uncertain actions early and increasingly emphasizing expected return later.

Action selection additionally uses an episode-dependent
$\epsilon_n$-greedy rule:
\begin{equation}
    a_t
    =
    \begin{cases}
    \text{sample uniformly from }\mathcal A_t,
        & \text{with probability }\epsilon_n,\\[1mm]
    \displaystyle
    \arg\max_{a\in\mathcal A_t}S_t(a),
        & \text{with probability }1-\epsilon_n .
    \end{cases}
    \label{eq:action_selection}
\end{equation}
The schedules of $c_n$ and $\epsilon_n$, together with the exact
construction mask, are specified in
Appendix~\ref{app:exploration}.

\subsection{Distributional Bellman Learning}
\label{sec:distributional_learning}

For the executed action $a_t$, let
\begin{equation}
    Z_t^{(m)}
    :=
    Z_{t,a_t}^{(m)}
    \sim
    \mathcal Z_{\theta}(s_t,a_t),
    \qquad
    m=1,\ldots,M,
    \label{eq:online_particles}
\end{equation}
denote particles from the online critic. Let $\bar\theta$ denote the
parameters of a periodically synchronized target copy of the complete SDE
return critic.

After the environment transition, the next state is
\begin{equation}
    s_{t+1}
    =
    (e_{t+1},p_{t+1},H_{t+1}),
\end{equation}
where $H_{t+1}$ follows Eq.~\ref{eq:rjm_update}.
For a nonterminal transition, the greedy next action is selected using the
online stochastic score:
\begin{equation}
    a_{t+1}^{\star}
    =
    \arg\max_{a\in\mathcal A_{t+1}}
    S_{t+1}(a).
    \label{eq:next_action}
\end{equation}
The target critic independently generates
\begin{equation}
    \widetilde Z_{t+1}^{(m)}
    \sim
    \mathcal Z_{\bar\theta}
    (s_{t+1},a_{t+1}^{\star}),
    \qquad
    m=1,\ldots,M,
\end{equation}
and the distributional Bellman particles are
\begin{equation}
    Y_t^{(m)}
    =
    r_t
    +
    \gamma(1-\delta_t)
    \widetilde Z_{t+1}^{(m)},
    \label{eq:bellman_particles}
\end{equation}
where $\gamma\in[0,1)$ is the discount factor. When $\delta_t=1$, the
Bellman target therefore reduces to the realized pool reward and does not
bootstrap across the formula boundary. The updated memory nevertheless
persists into the next formula episode, separating episodic return learning
from persistent evaluation history.

We match the predicted and Bellman-target particle distributions using the
empirical one-dimensional energy distance:
\begin{equation}
\mathcal L_{\mathrm{ED}}
=
\frac{2}{M^2}\sum_{m,m'=1}^{M}\left|Z_t^{(m)}-Y_t^{(m')}\right|
-
\frac{1}{M^2}\sum_{m,m'=1}^{M}\left|Z_t^{(m)}-Z_t^{(m')}\right|
-
\frac{1}{M^2}\sum_{m,m'=1}^{M}\left|Y_t^{(m)}-Y_t^{(m')}\right|.
\label{eq:energy}
\end{equation}
Energy distance directly compares distributions through samples
~\citep{szekely2013energy}; in one dimension it is closely related to
CRPS/energy-score objectives used in probabilistic prediction
~\citep{gneiting2007strictly}. We therefore refer to
Eq.~\ref{eq:energy} as a CRPS-style distributional objective.

Because finite particle sets can introduce sampling noise in the predicted
central tendency, we additionally use
\begin{equation}
    \mathcal L_{\mathrm{mean}}
    =
    \operatorname{Huber}
    \left(
        \frac{1}{M}\sum_{m=1}^{M}Z_t^{(m)},
        \frac{1}{M}\sum_{m=1}^{M}Y_t^{(m)}
    \right),
    \label{eq:mean_huber}
\end{equation}
where $\operatorname{Huber}$ denotes the Huber loss.

We further regularize the magnitude of memory jumps at actual evaluation
events. Let $\mathcal L_{\mathrm{jump}}$ denote the event-masked
squared-$\ell_2$ jump penalty; the exact replay-minibatch aggregation used
by the implementation is given in
Eq.~\ref{eq:jump_loss_exact} of
Appendix~\ref{app:replay}. The complete objective is
\begin{equation}
    \mathcal L
    =
    \lambda_{\mathrm{ED}}
    \mathcal L_{\mathrm{ED}}
    +
    \lambda_{\mathrm{mean}}
    \mathcal L_{\mathrm{mean}}
    +
    \beta_J
    \mathcal L_{\mathrm{jump}},
    \label{eq:total_loss}
\end{equation}
where $\lambda_{\mathrm{ED}}\geq0$,
$\lambda_{\mathrm{mean}}\geq0$, and $\beta_J\geq0$ control the three
loss components.

Finally, the persistence of $H_t$ makes independently shuffled transition
replay unsuitable: arbitrary reordering would break the event sequence that
defines the memory trajectory. AlphaRJM therefore trains on contiguous
transition sequences. Each sampled sequence starts from its stored
pre-transition memory, uses a burn-in prefix to reconstruct the persistent
trajectory under the current model, and applies the return-learning objective
to the subsequent unroll. The expression encoder, pool encoder,
Reward-Jump Memory, action embedding, and online SDE return critic are
optimized jointly, while the target SDE critic is synchronized periodically.
Full architecture, discretization, exploration, replay, and optimization
details are given in Appendix~\ref{app:implementation}; theoretical
properties are given in Appendix~\ref{app:theory}.

\section{Experiments and Results}\label{sec:experiments}

\subsection{Experiment Setting}
\textbf{Evaluation Metrics.}
Following established formulaic alpha discovery protocols~\citep{yu2023generating,zhu2025alphaqcm,chen2026alphasage}, we evaluate predictive performance using four correlation-based metrics: Information Coefficient (IC), Information Coefficient Information Ratio (ICIR), Rank Information Coefficient (RIC), and Rank Information Coefficient Information Ratio (RICIR). Higher values indicate better performance for all metrics. Detailed definitions and evaluation settings are provided in Appendix~\ref{app:metrics}.

\textbf{Datasets.}
We evaluate all methods on three representative Chinese equity universes: CSI300, CSI500, and CSI800. For all datasets, we use a common chronological split, with January 1, 2010--December 31, 2020 for training, January 1, 2021--December 31, 2021 for validation, and January 1, 2022--December 31, 2024 for testing. This fixed split is used consistently across all methods and random seeds. Additional protocol details are provided in Appendix~\ref{app:protocol}, and the AlphaRJM configurations are reported in Appendix~\ref{app:rjm_presets}.
We use $h$ to denote the forecasting horizon; the standard benchmark uses
$h=20$, while the long-horizon experiment uses $h=42$.

\textbf{Baselines.}
We compare AlphaRJM with representative baselines spanning three model families:
(1) conventional predictive models, including a multilayer perceptron (MLP)~\citep{murtagh1991multilayer} and Light Gradient Boosting Machine (LightGBM)~\citep{ke2017lightgbm};
(2) continuous-time neural models, including Neural ODE (ordinary differential equation)~\citep{chen2018neural} and Neural SDE~\citep{kidger2021neural}; and
(3) formulaic alpha discovery methods based on reinforcement learning, including AlphaGen~\citep{yu2023generating}, AlphaQCM~\citep{zhu2025alphaqcm}, and AlphaSAGE~\citep{chen2026alphasage}.
All baselines are evaluated under the same benchmark protocol whenever applicable; implementation details are provided in Appendix~\ref{app:baselines}.

\begin{table}[t]
\centering
\caption{
Performance comparison on CSI300, CSI500, and CSI800 using correlation-based
evaluation metrics. Results are reported as mean $\pm$ standard deviation
over four random seeds (0--3). Higher values are better for all metrics.
Best and second-best performances are shown in
\textbf{bold} and \underline{underlined}, respectively; rankings are determined
from the unrounded mean values.
}
\label{tab:main_correlation_results}

\footnotesize
\setlength{\tabcolsep}{3.0pt}
\renewcommand{\arraystretch}{1.12}

\begin{tabular}{llcccc}
\toprule
\textbf{Dataset}
& \textbf{Method}
& \textbf{IC}
& \textbf{ICIR}
& \textbf{RIC}
& \textbf{RICIR} \\
\midrule

\multirow{8}{*}{CSI300}
& MLP
& $0.030 \pm 0.017$
& $0.218 \pm 0.119$
& $0.035 \pm 0.024$
& $0.269 \pm 0.186$ \\

& LightGBM
& $0.024 \pm 0.003$
& $0.242 \pm 0.028$
& $0.020 \pm 0.001$
& $0.191 \pm 0.010$ \\

& Neural ODE
& $0.037 \pm 0.004$
& $0.260 \pm 0.028$
& $0.032 \pm 0.006$
& $0.214 \pm 0.040$ \\

& Neural SDE
& $0.032 \pm 0.005$
& $0.239 \pm 0.040$
& $0.026 \pm 0.005$
& $0.188 \pm 0.035$ \\

& AlphaGen
& $\second{0.041 \pm 0.011}$
& $\second{0.302 \pm 0.080}$
& $\second{0.046 \pm 0.021}$
& $\second{0.326 \pm 0.156}$ \\

& AlphaQCM
& $0.039 \pm 0.007$
& $0.276 \pm 0.051$
& $0.043 \pm 0.009$
& $0.301 \pm 0.073$ \\

& AlphaSAGE
& $0.033 \pm 0.016$
& $0.246 \pm 0.080$
& $0.037 \pm 0.017$
& $0.266 \pm 0.088$ \\

\rowcolor{oursbg}
& \textbf{AlphaRJM (ours)}
& $\best{0.042 \pm 0.007}$
& $\best{0.307 \pm 0.058}$
& $\best{0.046 \pm 0.003}$
& $\best{0.331 \pm 0.038}$ \\

\midrule

\multirow{8}{*}{CSI500}
& MLP
& $0.031 \pm 0.011$
& $0.269 \pm 0.088$
& $0.034 \pm 0.018$
& $0.299 \pm 0.162$ \\

& LightGBM
& $0.027 \pm 0.003$
& $0.311 \pm 0.027$
& $0.031 \pm 0.007$
& $0.364 \pm 0.077$ \\

& Neural ODE
& $0.029 \pm 0.002$
& $0.239 \pm 0.014$
& $0.028 \pm 0.002$
& $0.218 \pm 0.018$ \\

& Neural SDE
& $0.025 \pm 0.005$
& $0.215 \pm 0.030$
& $0.024 \pm 0.007$
& $0.186 \pm 0.055$ \\

& AlphaGen
& $0.036 \pm 0.005$
& $0.273 \pm 0.025$
& $0.041 \pm 0.007$
& $0.339 \pm 0.061$ \\

& AlphaQCM
& $0.035 \pm 0.007$
& $0.296 \pm 0.053$
& $0.042 \pm 0.012$
& $0.360 \pm 0.085$ \\

& AlphaSAGE
& $\second{0.041 \pm 0.007}$
& $\second{0.336 \pm 0.039}$
& $\second{0.059 \pm 0.016}$
& $\second{0.497 \pm 0.061}$ \\

\rowcolor{oursbg}
& \textbf{AlphaRJM (ours)}
& $\best{0.046 \pm 0.014}$
& $\best{0.379 \pm 0.159}$
& $\best{0.059 \pm 0.015}$
& $\best{0.523 \pm 0.173}$ \\

\midrule

\multirow{8}{*}{CSI800}
& MLP
& $0.015 \pm 0.007$
& $0.176 \pm 0.082$
& $0.018 \pm 0.013$
& $0.206 \pm 0.152$ \\

& LightGBM
& $0.026 \pm 0.003$
& $\second{0.305 \pm 0.035}$
& $0.025 \pm 0.004$
& $0.286 \pm 0.045$ \\

& Neural ODE
& $0.026 \pm 0.002$
& $0.237 \pm 0.024$
& $0.024 \pm 0.003$
& $0.197 \pm 0.033$ \\

& Neural SDE
& $0.028 \pm 0.003$
& $0.264 \pm 0.020$
& $0.028 \pm 0.003$
& $0.239 \pm 0.022$ \\

& AlphaGen
& $0.028 \pm 0.004$
& $0.258 \pm 0.054$
& $0.039 \pm 0.007$
& $0.344 \pm 0.088$ \\

& AlphaQCM
& $\second{0.029 \pm 0.017}$
& $0.260 \pm 0.147$
& $0.041 \pm 0.019$
& $0.332 \pm 0.150$ \\

& AlphaSAGE
& $0.025 \pm 0.020$
& $0.208 \pm 0.159$
& $\best{0.045 \pm 0.015}$
& $\best{0.396 \pm 0.082}$ \\

\rowcolor{oursbg}
& \textbf{AlphaRJM (ours)}
& $\best{0.036 \pm 0.007}$
& $\best{0.339 \pm 0.027}$
& $\second{0.043 \pm 0.007}$
& $\second{0.371 \pm 0.048}$ \\

\bottomrule
\end{tabular}
\end{table}

\begin{table}[t]
\centering
\caption{
IC and RIC performance on CSI300 and CSI500 for the
42-trading-day forecasting horizon ($h=42$) using random seed 0.
Higher values are better.
Best and second-best results are shown in
\textbf{bold} and \underline{underlined}, respectively.
}
\label{tab:h42_ic_ric}

\scriptsize
\setlength{\tabcolsep}{3.2pt}
\renewcommand{\arraystretch}{1.12}

\begin{tabular}{lcccccc}
\toprule
\multirow{2}{*}{\textbf{Method}}
& \multicolumn{2}{c}{\textbf{CSI300}}
& \multicolumn{2}{c}{\textbf{CSI500}} \\
\cmidrule(lr){2-3}
\cmidrule(lr){4-5}
& \textbf{IC}
& \textbf{RIC}
& \textbf{IC}
& \textbf{RIC} \\
\midrule

MLP
& $\best{0.046}$
& $0.056$
& $0.041$
& $0.050$ \\

LightGBM
& $0.027$
& $0.026$
& $0.031$
& $0.035$ \\

Neural ODE
& $0.014$
& $0.014$
& $0.028$
& $0.033$ \\

Neural SDE
& $0.007$
& $0.005$
& $0.019$
& $0.015$ \\

AlphaGen
& $0.021$
& $0.027$
& $0.034$
& $0.044$ \\

AlphaQCM
& $\second{0.045}$
& $\second{0.062}$
& $\second{0.044}$
& $\second{0.052}$ \\

AlphaSAGE
& $0.025$
& $0.043$
& $0.033$
& $0.050$\\

\rowcolor{oursbg}
\textbf{AlphaRJM (ours)}
& $\best{0.046}$
& $\best{0.066}$
& $\best{0.051}$
& $\best{0.065}$ \\

\bottomrule
\end{tabular}
\end{table}

\begin{figure}[t]
    \centering
    \includegraphics[width=\linewidth]{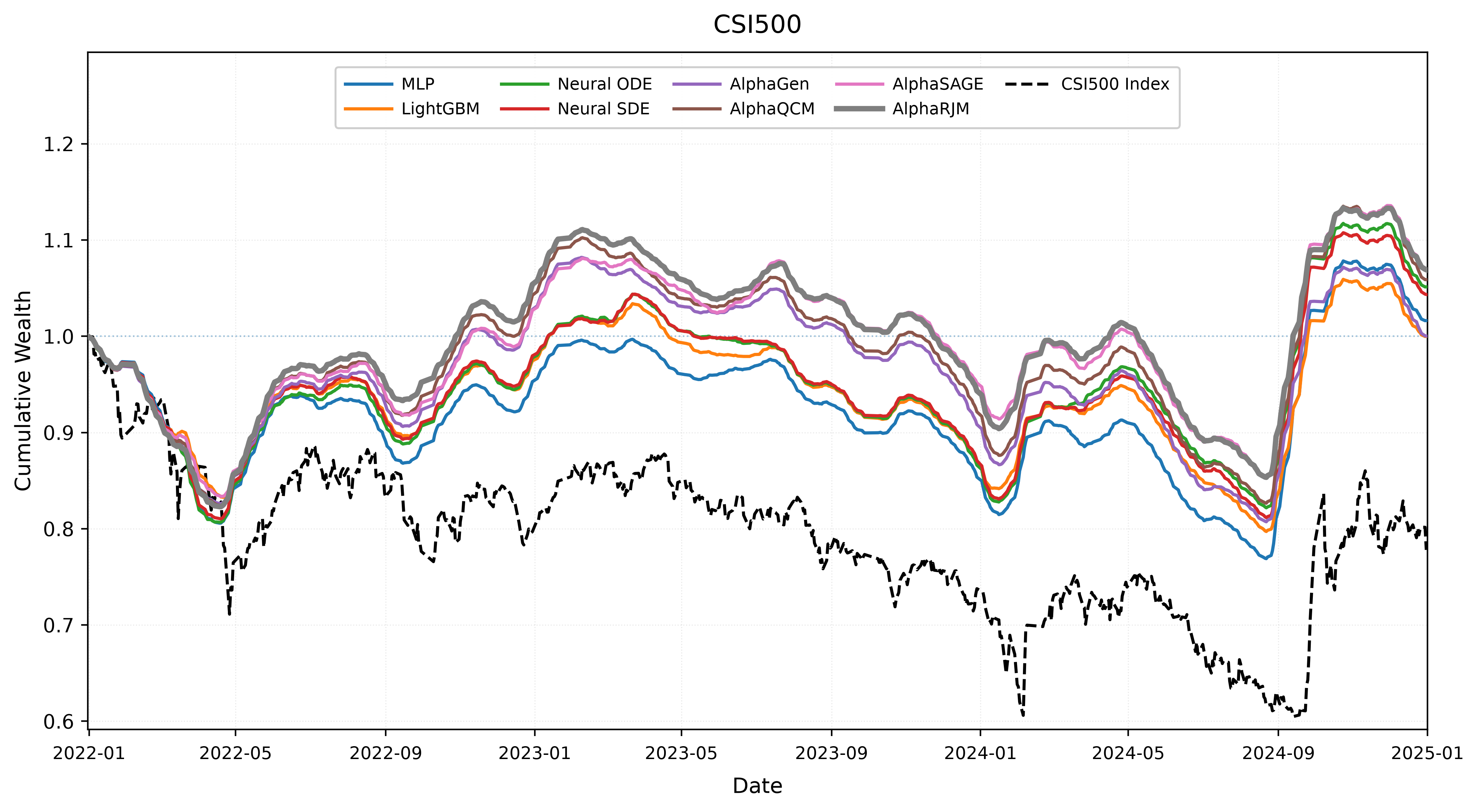}
    \caption{Cumulative wealth curves on CSI500 over the test period, averaged across random seeds 0--3.}
    \label{fig:csi500_cumulative_wealth}
\end{figure}

\subsection{Overall Performance and Robustness}

Table~\ref{tab:main_correlation_results} reports IC, ICIR, RIC, and RICIR
on CSI300, CSI500, and CSI800, averaged over four random seeds ($0$--$3$).
AlphaRJM achieves the best mean result in 10 of the 12 dataset--metric
comparisons and ranks second in the remaining two. It leads both IC and ICIR
on all three universes and all four metrics on CSI300 and CSI500; on CSI800,
it achieves the best IC and ICIR and ranks second on RIC and RICIR.
Despite stochastic variation across seeds, the method retains strong mean
performance overall, indicating that the gains are not driven by a single
favorable initialization.

Figure~\ref{fig:csi500_cumulative_wealth} provides a complementary view on CSI500.
AlphaRJM maintains the strongest cumulative-wealth trajectory for most of the
2022--2024 test period and recovers strongly after major market declines,
including the late-2024 rebound. The CSI500 index remains substantially below
the learned strategies over most of the evaluation period, supporting the
persistence of the predictive advantage observed in Table~\ref{tab:main_correlation_results}.

Controlled ablations in Appendix~\ref{app:ablation} isolate the contributions
of the SDE return critic, distributional supervision, and persistent memory.

\subsection{Long-Horizon Generalization}

At the $42$-trading-day forecasting horizon ($h=42$),
Table~\ref{tab:h42_ic_ric} reports the long-horizon results. AlphaRJM ranks first on both IC and RIC for CSI300 and CSI500,
achieving $(0.046,0.066)$ and $(0.051,0.065)$, respectively. Overall, AlphaRJM ranks first in four of the four long-horizon
dataset--metric comparisons.

Figure~\ref{fig:csi500_h42_cumulative_wealth} in Appendix~\ref{app:lg} shows that this advantage also extends
to cumulative wealth on CSI500: AlphaRJM maintains a clear lead for much of
the test period and reaches the highest cumulative-wealth level during the
late-2024 recovery. These results suggest that its predictive effectiveness
is not limited to the standard forecasting horizon.

\subsection{Sensitivity Analysis}
We evaluate AlphaRJM on CSI300 by varying the number of SDE particles, integration steps, energy-distance (CRPS-style) loss weight $\lambda_{\mathrm{ED}}$, and maximum diffusion scale $\sigma_{\max}$ (Fig.~\ref{fig:rjm_sensitivity}). Fewer particles generally yield stronger correlation metrics, while performance is relatively insensitive to the number of integration steps. The energy-distance weight introduces a trade-off across IC- and rank-based metrics, and moderate $\sigma_{\max}$ gives the most balanced results, whereas excessive diffusion degrades performance. Overall, AlphaRJM exhibits smooth and stable behavior across the tested configurations. The exact one-factor-at-a-time settings are given in Appendix~\ref{app:sensitivity_details}.

\begin{figure}[t]
    \centering
    \includegraphics[width=\linewidth]{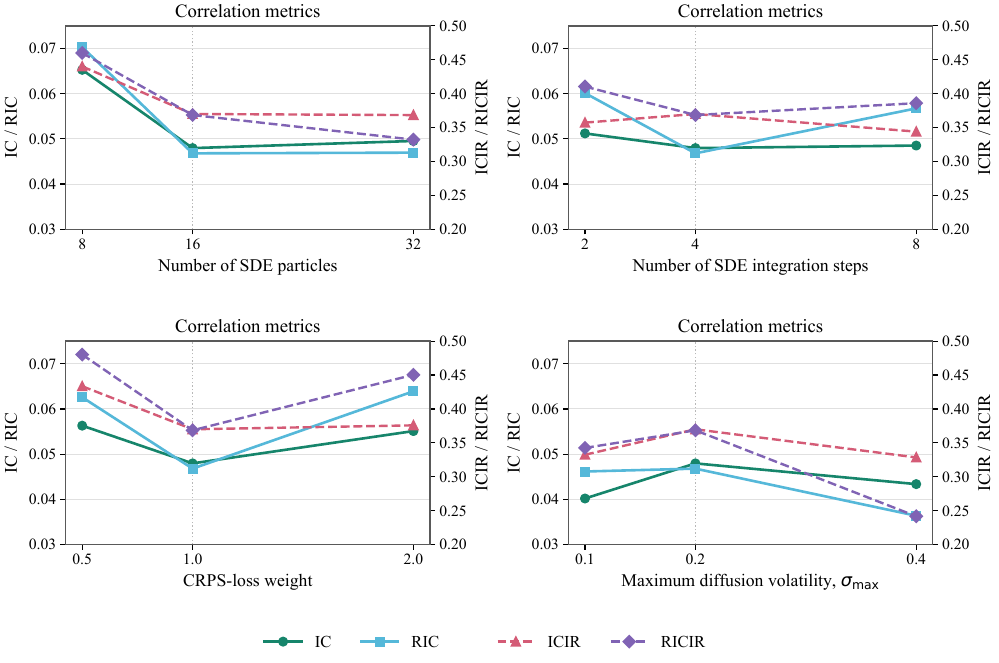}
    \caption{Sensitivity analysis of AlphaRJM on CSI300 with respect to the number of SDE particles, number of SDE integration steps, energy-distance (CRPS-style) loss weight $\lambda_{\mathrm{ED}}$, and maximum diffusion scale~$\sigma_{\max}$. The figure reports IC, RIC, ICIR, and RICIR, with the dotted vertical lines indicating the default configurations.}
    \label{fig:rjm_sensitivity}
\end{figure}

\section{Conclusion}

We introduced AlphaRJM, a formulaic alpha-discovery framework that
combines persistent terminal-evaluation history with stochastic
return-guided symbolic search. Reward-Jump Memory remains fixed during
token-level construction and updates only after terminal evaluations using
the realized pool reward and categorical outcome, allowing evaluation
history to persist across formula episodes even when the retained pool is
unchanged. An action-conditioned SDE return critic complements this memory
by generating particles that represent uncertain future discounted discovery
returns; their empirical mean and uncertainty guide action selection, while
their distribution is learned through a CRPS-style distributional Bellman
objective.

Across CSI300, CSI500, and CSI800, AlphaRJM achieves strong and stable
correlation-based performance relative to conventional, continuous-time, and
formulaic alpha-discovery baselines. The improvement extends to the
42-trading-day forecasting setting, while cumulative-wealth results provide
complementary evidence of out-of-sample effectiveness. Ablation experiments
support the complementary contributions of persistent evaluation history,
stochastic return modeling, and distributional supervision, and sensitivity
analysis shows stable behavior across the tested configurations. Overall,
the results indicate that preserving terminal reward--outcome history and
modeling future discovery returns distributionally provide complementary
signals for pool-dependent symbolic alpha discovery.

\bibliography{rjm}
\bibliographystyle{rjm}

\appendix

\clearpage
\section{The Use of Large Language Models (LLMs)}
\label{app:llm}

LLMs were used solely as auxiliary tools for language refinement, grammatical correction, improving clarity and presentation, and limited assistance in writing and debugging selected portions of the code. All methodological development, experiments, analyses, results, and scientific conclusions were independently conducted, carefully reviewed, and verified by the authors.

\section{AlphaRJM Implementation Details}
\label{app:implementation}

This appendix specifies the implementation corresponding to the
methodology in Section~\ref{sec:methodology}. Architectural dimensions,
symbolic-construction details, numerical SDE integration, exploration
schedules, replay, and optimization settings are given here to keep the
main text focused on the method itself.

\subsection{Architecture Overview}
\label{app:architecture_overview}

Figure~\ref{fig:alpharjm_architecture} summarizes the complete search loop.
At an exploitation step, the partial expression and retained alpha pool are
encoded, while the persistent memory $H_t$ directly conditions the
action-wise SDE return critic. At an exploration step, the
$\epsilon_n$-greedy branch samples directly from the currently available
action set $\mathcal A_t$. Terminal formula evaluations produce a pool
reward $r_t$ and outcome $o_t$, after which Reward-Jump Memory is updated
before construction of the next formula episode.

\begin{figure}[t]
    \centering
    \includegraphics[width=\linewidth]{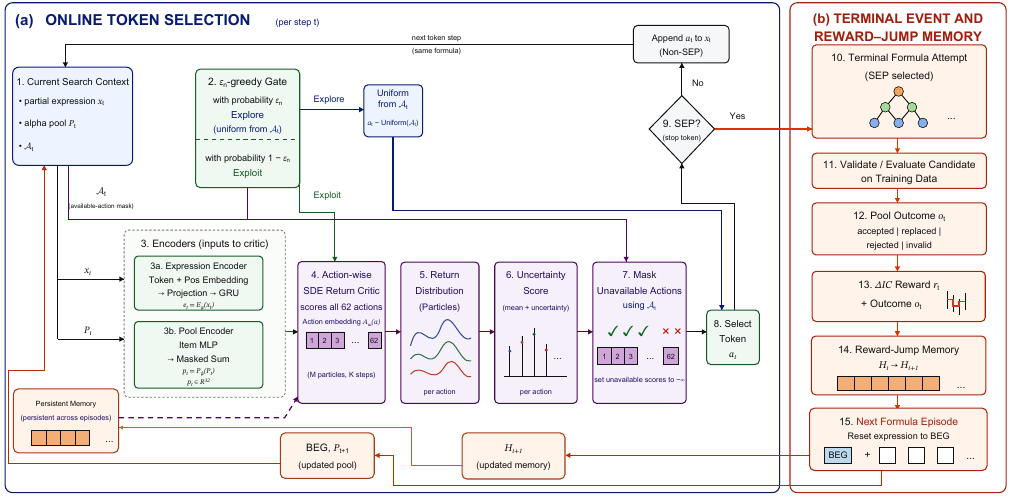}
    \caption{
    Architecture overview of AlphaRJM. The $\epsilon_n$-greedy policy
    either samples uniformly from $\mathcal A_t$, the action set exposed by the
    current construction mask, or exploits the action-conditioned SDE return
    critic using the encoded expression $e_t$, encoded pool $p_t$, persistent
    memory $H_t$, and candidate action. Terminal evaluations produce
    reward--outcome feedback $(r_t,o_t)$ that updates Reward-Jump Memory
    before the next formula episode.
    }
    \label{fig:alpharjm_architecture}
\end{figure}

\subsection{Symbolic Construction and Pool Evaluation}
\label{app:symbolic_search}

AlphaRJM uses the same expression grammar family as the controlled
formulaic-alpha benchmark. The selectable vocabulary contains 62 actions,
while the beginning-of-expression token (BEG) and padding token (PAD) are
input-only. Expressions contain at most
\[
    L_{\max}=20
\]
tokens in the benchmark state representation. The terminal separator action,
denoted SEP, completes the current formula attempt whenever selected.

The available action set $\mathcal A_t$ in
Section~\ref{sec:framework} is the set exposed by the benchmark construction
mask. The mask first applies the expression-builder legality rules. In
addition, we reproduce the released AlphaSAGE length-dependent early-stop
rule. Whenever SEP is currently available and the maximum length has not
been reached, all non-SEP actions are masked with probability
\begin{equation}
    p_{\mathrm{stop},t}
    =
    p_{\mathrm{mask}}
    \frac{\ell_t^{\mathrm{state}}}{L_{\max}},
    \label{eq:construction_mask_stop}
\end{equation}
where $\ell_t^{\mathrm{state}}$ is the current state length including BEG.
The reported runs use $p_{\mathrm{mask}}=1$. At the maximum length, SEP is
made available unconditionally so that the attempt terminates. The mask is
cached for the current state, ensuring that action selection and environment
execution use the same realization.

A terminal action increments the formula-episode counter and evaluates the
completed expression on the training data. A syntactically or numerically
invalid terminal attempt receives outcome $\mathrm{invalid}$ and reward
zero. Otherwise, the candidate is considered for inclusion in an alpha pool
of capacity 50. Pool quality is the weighted-ensemble training IC
$Q(\mathcal P)$, and the reward is exactly the marginal improvement defined
in Eq.~\ref{eq:pool_reward}.

Pool coefficients are jointly optimized using the controlled benchmark
mechanism. If the pool is not full, a valid inserted candidate receives
outcome $\mathrm{accepted}$. If the pool is full, the candidate and existing
members are jointly reweighted. If the candidate obtains the smallest
absolute coefficient, it receives outcome $\mathrm{rejected}$ and the
pre-candidate pool is restored exactly. Otherwise, the corresponding
existing member is removed and the candidate receives outcome
$\mathrm{replaced}$. A valid candidate whose IC statistics cannot be used
by the pool is likewise treated as $\mathrm{rejected}$.

Thus the reported environment realizes the four outcomes in
Eq.~\ref{eq:outcome_set}. A valid accepted or replaced candidate may
have a zero-valued incremental reward; no separate ``terminal-zero'' outcome
is emitted by the reported search environment.

\subsection{Expression and Pool Encoders}
\label{app:encoders}

The partial-expression encoder $E_{\phi}$ uses learned token and positional
embeddings, each of dimension 32. Their concatenation is projected to
dimension 32 and passed through a gated recurrent unit (GRU). The final valid
hidden state is the expression representation
\begin{equation}
    e_t\in\mathbb R^{32}.
\end{equation}

Completed formulas stored in the alpha pool reuse the token embeddings,
position embeddings, input projection, and GRU of the partial-expression
encoder. Their final hidden states are projected to 24-dimensional formula
embeddings. For pool member $j$, let
\[
    f_j\in\mathbb R^{24}
\]
denote this embedding.

The pool-item representation concatenates $f_j$ with five scalar quantities:
\begin{equation}
    \left[
        f_j;\,
        w_j;\,
        \operatorname{IC}_j;\,
        \ell_j;\,
        \rho_j;\,
        m_j
    \right],
    \label{eq:pool_item_input}
\end{equation}
where $w_j$ is the ensemble coefficient,
$\operatorname{IC}_j$ is the standalone training IC,
$\ell_j$ is the formula token length,
$\rho_j$ is the maximum absolute redundancy with the other retained
formulas, and $m_j\in\{0,1\}$ is the pool-slot occupancy mask.

Each occupied pool item is mapped to a 32-dimensional hidden representation.
The item representations are summed across the pool, yielding a
permutation-invariant aggregate, and a final projection produces
\begin{equation}
    p_t\in\mathbb R^{32}.
\end{equation}
A learned 32-dimensional embedding represents the empty pool.

The persistent memory dimension is
\begin{equation}
    H_t\in\mathbb R^8,
\end{equation}
and each action has a learned embedding
\begin{equation}
    A_{\omega}(a)\in\mathbb R^{16}.
\end{equation}
Consequently, Eq.~\ref{eq:sde_context} has dimension
\begin{equation}
    \dim(q_{t,a})
    =
    32+32+8+16
    =
    88.
\end{equation}

\subsection{Reward-Jump Memory Architecture}
\label{app:rjm_details}

At the beginning of the complete search stream,
Eq.~\ref{eq:rjm_init} maps the initial expression and pool
representations to $H_0\in\mathbb R^8$. In the reported implementation this
initialization is evaluated once without gradient tracking, and the resulting
$H_0$ is detached before the search begins. Consequently, no training loss is
backpropagated through the one-time initialization map $(W_0,b_0)$.

The implementation reserves five coordinates for categorical outcome
encoding. The four realized outcomes in
Eq.~\ref{eq:outcome_set} occupy four of these coordinates; the fifth
slot corresponds to a reserved terminal-zero code that is not emitted by the
reported environment. Thus
\begin{equation}
    [r_t;\operatorname{onehot}_{5}(o_t)]
    \in\mathbb R^6
    \label{eq:event_encoder_input}
\end{equation}
at a realized terminal event.

The event encoder implements
\[
\operatorname{LayerNorm}(6)
\rightarrow
\operatorname{Linear}(6,96)
\rightarrow
\operatorname{SiLU}
\rightarrow
\operatorname{Linear}(96,32)
\rightarrow
\tanh,
\]
producing
\[
    \chi_t\in\mathbb R^{32}.
\]

The eight-dimensional current memory and 32-dimensional event mark are
concatenated to form a 40-dimensional jump input. The context network is
\[
\operatorname{LayerNorm}(40)
\rightarrow
\operatorname{Linear}(40,96)
\rightarrow
\operatorname{SiLU}
\rightarrow
\operatorname{Linear}(96,96)
\rightarrow
\operatorname{SiLU}.
\]
Separate heads then produce
\[
    g_t\in(0,1)^8,
    \qquad
    \zeta_t\in(-1,1)^8.
\]
The experiments use
\[
    J_{\max}=0.02,
\]
so Eq.~\ref{eq:rjm_jump} becomes
\begin{equation}
    J_t
    =
    0.02\,g_t\odot\zeta_t,
\end{equation}
which guarantees
\[
    \|J_t\|_{\infty}\leq0.02.
\]

The event-driven jump depends on the current memory together with the
realized reward--outcome event mark. The expression and pool encodings do not
enter Eq.~\ref{eq:rjm_jump} directly. Reward-Jump Memory contains no
continuous drift, diffusion, or Brownian component; between terminal events,
Eq.~\ref{eq:rjm_update} leaves $H_t$ exactly unchanged.

\subsection{SDE Return Critic}
\label{app:sde_details}

The action-conditioned critic receives the 88-dimensional vector
$q_{t,a}$ from Eq.~\ref{eq:sde_context}. A two-layer SiLU network
with hidden width $d_{\mathrm{critic}}$ maps this vector to four scalar
heads:
\[
    z_{0,t,a},
    \qquad
    \mu_{t,a},
    \qquad
    \widehat\kappa_{t,a},
    \qquad
    \widehat\sigma_{t,a}.
\]

The positive mean-reversion rate in
Eq.~\ref{eq:return_sde} is parameterized as
\begin{equation}
    \kappa_{t,a}
    =
    \operatorname{softplus}
    (\widehat\kappa_{t,a})
    +
    10^{-4},
    \label{eq:kappa_parameterization}
\end{equation}
and the diffusion scale is
\begin{equation}
    \sigma_{t,a}
    =
    \sigma_{\min}
    +
    (\sigma_{\max}-\sigma_{\min})
    \operatorname{sigmoid}
    (\widehat\sigma_{t,a}),
    \label{eq:sigma_parameterization}
\end{equation}
with
\[
    \sigma_{\min}=10^{-3}.
\]

Eq.~\ref{eq:return_sde} is simulated using $K$ Euler--Maruyama
steps over $\tau\in[0,1]$. Define
\begin{equation}
    \Delta\tau=\frac{1}{K}.
    \label{eq:em_step_size}
\end{equation}
For fixed $(t,a)$, particle $m$ is initialized by
\[
    Z_0^{(m)}=z_{0,t,a}
\]
and evolves as
\begin{equation}
\begin{aligned}
    Z_{k+1}^{(m)}
    ={}&
    Z_k^{(m)}
    +
    \kappa_{t,a}
    \bigl(
        \mu_{t,a}-Z_k^{(m)}
    \bigr)
    \Delta\tau
    \\
    &+
    \sigma_{t,a}
    \sqrt{\Delta\tau}\,
    \varepsilon_k^{(m)},
\end{aligned}
\label{eq:em_update}
\end{equation}
where
\[
    \varepsilon_k^{(m)}
    \overset{\mathrm{i.i.d.}}{\sim}
    \mathcal N(0,1),
    \qquad
    k=0,\ldots,K-1,
    \quad
    m=1,\ldots,M.
\]
After $K$ integration steps,
\[
    Z_{t,a}^{(m)}:=Z_K^{(m)}
\]
is the terminal return particle appearing in
Eq.~\ref{eq:particles}. Independent Gaussian increments are used
across particles.

\subsection{Exploration Strategy}
\label{app:exploration}

The uncertainty coefficient in Eq.~\ref{eq:action_score} is linearly
annealed according to
\begin{equation}
    c_n
    =
    c_0
    +
    (c_N-c_0)
    \min\!\left(\frac{n}{N},1\right),
    \qquad
    c_0=1,
    \quad
    c_N=0.
    \label{eq:uncertainty_schedule}
\end{equation}

The $\epsilon_n$-greedy probability in
Eq.~\ref{eq:action_selection} follows
\begin{equation}
    \epsilon_n
    =
    \epsilon_0
    +
    (\epsilon_N-\epsilon_0)
    \min\!\left(\frac{n}{N},1\right),
    \qquad
    \epsilon_0=1,
    \quad
    \epsilon_N=0.05.
    \label{eq:epsilon_schedule}
\end{equation}
The reported runs use $N=10{,}000$ completed formula episodes.

On the random branch, an action is sampled uniformly from
$\mathcal A_t$ without invoking the SDE critic. On the exploitation branch,
the implementation vectorizes the SDE critic over all 62 selectable actions,
computes their particle scores, masks actions not contained in
$\mathcal A_t$ to $-\infty$, and selects the maximizer.

\subsection{Contiguous Replay and Optimization}
\label{app:replay}

The replay buffer stores primitive transitions together with the persistent
memory present immediately before each action and identifiers for the
corresponding current and next pool snapshots. Its capacity is 50,000
transitions.

Training samples contiguous sequences of length
\[
    L_{\mathrm{seq}}=16.
\]
The first
\[
    L_{\mathrm{burn}}=8
\]
positions serve as burn-in, after which the remaining eight positions
contribute directly to the SDE return-learning objective. Starting from the
stored pre-transition memory of each sampled sequence,
Eq.~\ref{eq:rjm_update} is unrolled chronologically so that later
learning positions use a memory trajectory reconstructed under the current
jump-network parameters.

For completeness, the implementation aggregates jump regularization over
all event-bearing positions of the full 16-step memory unroll, including
burn-in. Let $B_{\mathrm{batch}}$ be the replay batch size, let
$\delta_{b,k}$ and $J_{b,k}$ denote the terminal indicator and candidate jump
for sequence $b\in\{1,\ldots,B_{\mathrm{batch}}\}$ at replay position $k$, and
define
\[
    \mathcal K_{\mathrm{evt}}
    =
    \left\{
        k:
        \sum_{b=1}^{B_{\mathrm{batch}}}\delta_{b,k}>0
    \right\}.
\]
When $\mathcal K_{\mathrm{evt}}\neq\emptyset$, the implemented jump penalty is
\begin{equation}
    \mathcal L_{\mathrm{jump}}
    =
    \frac{1}{|\mathcal K_{\mathrm{evt}}|}
    \sum_{k\in\mathcal K_{\mathrm{evt}}}
    \frac{
        \sum_{b=1}^{B_{\mathrm{batch}}}
        \delta_{b,k}
        \|J_{b,k}\|_2^2
    }{
        \sum_{b=1}^{B_{\mathrm{batch}}}
        \delta_{b,k}
    }.
    \label{eq:jump_loss_exact}
\end{equation}
If no evaluation event occurs in the sampled unroll, this term is zero.

Replay optimization begins after 2,048 stored primitive transitions.
The replay batch size is
\[
    B_{\mathrm{batch}}=16,
\]
and one gradient update is triggered every four primitive actions.

The discount factor is
\[
    \gamma=0.99.
\]
Trainable online components are optimized jointly with AdamW and gradient
clipping at maximum norm 10. The target SDE return critic is synchronized
with the online critic every 50 optimizer updates.

For the default AlphaRJM configuration,
\begin{equation}
    \lambda_{\mathrm{ED}}=1,
    \qquad
    \lambda_{\mathrm{mean}}=0.1,
    \qquad
    \beta_J=10^{-4}.
    \label{eq:default_loss_weights}
\end{equation}

\subsection{AlphaRJM Pseudocode}
\label{app:alpharjm_pseudocode}

Algorithm~\ref{alg:alpharjm} summarizes the complete search and
optimization procedure using the notation introduced in
Section~\ref{sec:methodology}.

\begin{algorithm}[!t]
\caption{AlphaRJM}
\label{alg:alpharjm}
\small

\KwIn{Formula-episode budget $N$}
\KwOut{Final alpha pool $\mathcal P$}

Initialize $\mathcal P\leftarrow\emptyset$ and replay buffer
$\mathcal D\leftarrow\emptyset$\;

Initialize online model parameters and target SDE critic
$\bar\theta\leftarrow\theta$\;

Encode the initial context and initialize $H_0$
by Eq.~\ref{eq:rjm_init}\;

\While{completed formula episodes $<N$}{

    Compute
    $e_t\leftarrow E_{\phi}(x_t)$ and
    $p_t\leftarrow P_{\psi}(\mathcal P_t)$\;

    Obtain the current available action set $\mathcal A_t$\;

    \eIf{sample $\epsilon_n$-greedy exploration branch}{
        Sample $a_t$ uniformly from $\mathcal A_t$\;
    }{
        \ForEach{$a\in\mathcal A_t$}{
            Generate
            $\mathcal Z_{\theta}(s_t,a)$
            by Eqs.~\ref{eq:return_sde}--\ref{eq:particles}\;

            Compute $S_t(a)$
            by Eq.~\ref{eq:action_score}\;
        }

        $a_t\leftarrow
        \arg\max_{a\in\mathcal A_t}S_t(a)$\;
    }

    Execute $a_t$ and observe
    $(x_{t+1},\mathcal P_{t+1},r_t,\delta_t)$\;

    \eIf{$\delta_t=1$}{
        Observe terminal outcome $o_t$\;

        Compute event mark $\chi_t$
        by Eq.~\ref{eq:rjm_mark}\;

        Compute $J_t$
        by Eq.~\ref{eq:rjm_jump}\;

        $H_{t+1}\leftarrow H_t+J_t$\;

        Start the next formula episode while retaining $H_{t+1}$\;
    }{
        $H_{t+1}\leftarrow H_t$\;
    }

    Store the transition, pre-action memory $H_t$,
    and pool-snapshot identifiers in $\mathcal D$\;

    \If{replay conditions in Appendix~\ref{app:replay} are satisfied}{

        Sample contiguous sequences from $\mathcal D$\;

        Reconstruct the memory trajectory through burn-in\;

        \ForEach{learning position in the replay unroll}{

            Generate online particles
            $Z_t^{(m)}
            \sim
            \mathcal Z_{\theta}(s_t,a_t)$\;

            Select $a_{t+1}^{\star}$
            by Eq.~\ref{eq:next_action}\;

            Generate independent target particles
            $\widetilde Z_{t+1}^{(m)}
            \sim
            \mathcal Z_{\bar\theta}
            (s_{t+1},a_{t+1}^{\star})$\;

            Form Bellman particles
            $Y_t^{(m)}$
            by Eq.~\ref{eq:bellman_particles}\;
        }

        Compute the objective
        in Eq.~\ref{eq:total_loss}\;

        Update the online model parameters\;

        Periodically synchronize
        $\bar\theta\leftarrow\theta$\;
    }
}

\KwRet{$\mathcal P$}\;

\end{algorithm}

\subsection{AlphaRJM Hyperparameter Settings}
\label{app:rjm_presets}

Table~\ref{tab:rjm_presets} reports the AlphaRJM settings that vary across
the reported universes. Parameters not listed in the table follow the common
implementation settings above.

\begin{table}[H]
\centering
\caption{
AlphaRJM configurations used in the reported experiments. LR denotes the
AdamW learning rate, WD denotes weight decay, $d_{\mathrm{critic}}$ is the
hidden width of the SDE critic, $M$ is the number of return particles, and
$K$ is the number of Euler--Maruyama steps.
}
\label{tab:rjm_presets}

\scriptsize
\setlength{\tabcolsep}{2.4pt}
\renewcommand{\arraystretch}{1.10}

\resizebox{\linewidth}{!}{
\begin{tabular}{lcccccccc}
\toprule
\textbf{Preset}
& \textbf{LR}
& \textbf{WD}
& $\mathbf{d_{\mathrm{critic}}}$
& $\mathbf{M}$
& $\mathbf{K}$
& $\boldsymbol{\lambda_{\mathrm{ED}}}$
& $\boldsymbol{\lambda_{\mathrm{mean}}}$
& $\boldsymbol{\beta_J}$ \\
\midrule

CSI300/CSI500
& $3.0{\times}10^{-4}$
& $1.0{\times}10^{-4}$
& 64
& 16
& 4
& 1.0
& 0.1
& $1.0{\times}10^{-4}$ \\

CSI800/CSI1000
& $1.442{\times}10^{-4}$
& $2.144{\times}10^{-6}$
& 32
& 32
& 4
& 1.0
& 0.1
& $1.0{\times}10^{-4}$ \\

\bottomrule
\end{tabular}
}
\end{table}

The first preset is used for CSI300 and CSI500; the second is used for the
reported CSI800 experiment and the CSI1000 appendix experiment. Each
$h=42$ run reuses the corresponding universe's $h=20$ AlphaRJM
configuration rather than introducing long-horizon-specific model tuning.
All reported configurations use
\[
    \sigma_{\max}=0.20.
\]

\section{Sensitivity Configuration}

\label{app:sensitivity_details}

Sensitivity analysis is conducted on CSI300 with seed 0 using a
one-factor-at-a-time protocol. Starting from the default configuration, we
vary:
\begin{align}
    M &\in \{8,16,32\},\\
    K &\in \{2,4,8\},\\
    \lambda_{\mathrm{ED}}
      &\in \{0.5,1.0,2.0\},\\
    \sigma_{\max}
      &\in \{0.10,0.20,0.40\}.
\end{align}
The default operating point is
\[
    (M,K,\lambda_{\mathrm{ED}},\sigma_{\max})
    =
    (16,4,1.0,0.20).
\]
All other parameters remain fixed while one quantity is varied.

\section{Experiment Details}
\label{app:experiments}

This section complements the experimental summary in
Section~\ref{sec:experiments}. We first specify the common benchmark protocol,
then define the reported metrics, and finally describe the baseline
implementations and computational environment. AlphaRJM-specific architecture
and optimization settings are given in Appendix~\ref{app:implementation}, with
the universe-specific settings summarized in Table~\ref{tab:rjm_presets}.

\subsection{Benchmark Protocol}
\label{app:protocol}

The main benchmark uses CSI300, CSI500, and CSI800; CSI1000 is included as an
additional large-universe experiment in Appendix~\ref{app:csi1000_results}.
All methods use the same chronological partitions: January 1, 2010--December
31, 2020 for training, January 1--December 31, 2021 for validation, and January
1, 2022--December 31, 2024 for testing. Training and model selection therefore
use no observations from the test period.

Let $C_{i,d}$ denote the closing price of stock $i$ on trading date $d$. For a
forecasting horizon $h$, the prediction target is
\begin{equation}
    y_{i,d}^{(h)}
    =
    \frac{C_{i,d+h}}{C_{i,d}}-1,
    \label{eq:forward_return_target}
\end{equation}
implemented in the benchmark as
$\texttt{Ref(CLOSE,-h)/CLOSE-1}$. The main experiments use $h=20$, while the
long-horizon study changes only the target horizon to $h=42$.

The formula-discovery methods share the same expression grammar and an alpha
pool capacity of 50. AlphaQCM, AlphaSAGE, and AlphaRJM are trained for 10,000
terminal formula episodes. AlphaGen retains its native PPO step-based training
and uses 50,000 requested primitive-action steps; because PPO collects complete
2,048-step rollouts, the completed runs contain 51,200 executed primitive
actions. We preserve each method's native optimization unit rather than
rewriting its training procedure solely to express all budgets in the same
counter.

The main $h=20$ results are aggregated over random seeds $0,1,2,3$. The
$h=42$ comparison uses seed 0, and the CSI1000 appendix table uses seeds 0 and
1. Apart from these explicitly stated reporting choices, the dataset split,
target construction, universe definition, pool protocol, and downstream
evaluation are shared across compared methods whenever applicable.

\subsection{Evaluation Metrics}
\label{app:metrics}

Let $\widehat y_{i,d}^{(h)}$ denote the predicted alpha score corresponding to
the target in Eq.~\ref{eq:forward_return_target}. For each test date $d$, the
Information Coefficient (IC) and Rank Information Coefficient (RIC) are the
cross-sectional Pearson and rank correlations, respectively:
\begin{align}
    \operatorname{IC}_d
    &=
    \operatorname{Corr}_i
    \!\left(
        \widehat y_{i,d}^{(h)},
        y_{i,d}^{(h)}
    \right),\\
    \operatorname{RIC}_d
    &=
    \operatorname{Corr}_i
    \!\left(
        \operatorname{rank}(\widehat y_{i,d}^{(h)}),
        \operatorname{rank}(y_{i,d}^{(h)})
    \right).
\end{align}
Here $\operatorname{Corr}_i$ is computed across stocks at a fixed date,
whereas $\mathbb E_d$ and $\operatorname{Var}_d$ denote the empirical mean and
variance across test dates. The four reported statistics are
\begin{align}
    \operatorname{IC}
    &= \mathbb E_d[\operatorname{IC}_d],
    &
    \operatorname{ICIR}
    &=
    \frac{\mathbb E_d[\operatorname{IC}_d]}
         {\sqrt{\operatorname{Var}_d(\operatorname{IC}_d)}},\\
    \operatorname{RIC}
    &= \mathbb E_d[\operatorname{RIC}_d],
    &
    \operatorname{RICIR}
    &=
    \frac{\mathbb E_d[\operatorname{RIC}_d]}
         {\sqrt{\operatorname{Var}_d(\operatorname{RIC}_d)}}.
\end{align}
Higher values indicate stronger predictive performance or temporal stability.

\paragraph{Cumulative wealth.}
For the complementary portfolio visualization, stocks are ranked by their
predicted alpha score on each trading date $d$, and $\mathcal T_d$ denotes the
top $20\%$ of valid stocks. Following the evaluation implementation, the daily
contribution used for wealth accumulation is
\begin{equation}
    R_d
    =
    \frac{1}{h}\,
    \frac{1}{|\mathcal T_d|}
    \sum_{i\in\mathcal T_d} y_{i,d}^{(h)}.
    \label{eq:portfolio_return}
\end{equation}
Starting from $\mathcal W_0=1$, cumulative wealth is
\begin{equation}
    \mathcal W_d
    =
    \mathcal W_{d-1}(1+R_d)
    =
    \prod_{d'=1}^{d}(1+R_{d'}).
    \label{eq:cumulative_wealth}
\end{equation}
When multiple seeds are shown, the wealth trajectory is computed separately for
each seed and the plotted curve is their pointwise mean,
\begin{equation}
    \overline{\mathcal W}_d
    =
    \frac{1}{N_{\mathrm{seed}}}
    \sum_{\nu=1}^{N_{\mathrm{seed}}}
    \mathcal W_d^{(\nu)},
    \label{eq:mean_cumulative_wealth}
\end{equation}
where $N_{\mathrm{seed}}$ is the number of seeds and $\nu$ indexes the seed.
The same portfolio rule is used for all compared methods.

\subsection{Baseline Implementations}
\label{app:baselines}

The common protocol above is applied without redefining the internal learning
mechanism of each baseline. Our unified benchmark was assembled using the
public reference implementations of AlphaSAGE, AlphaQCM, and AlphaGen,
respectively:\footnote{\url{https://github.com/BerkinChen/AlphaSAGE},
\url{https://github.com/ZhuZhouFan/AlphaQCM}, and
\url{https://github.com/RL-MLDM/alphagen}.} For these formula-discovery
baselines, method-specific architectures and learner settings are retained from
the released implementations, while the shared data, target, grammar, pool,
and evaluation protocol follow Appendix~\ref{app:protocol}.

\paragraph{MLP and LightGBM.}
The multilayer perceptron and LightGBM baselines use the tabular feature
construction of the reference AlphaSAGE benchmark. They provide conventional
direct-prediction comparators under the same materialized features, labels,
and chronological partitions.

\paragraph{Neural ODE and Neural SDE.}
These are controlled direct-prediction baselines using the same materialized
features, labels, normalization, and train/validation/test partitions as the
other predictive models. Their continuous-time dynamics map market features to
the prediction target directly; they are therefore distinct from AlphaRJM's
SDE return critic, whose state models the distribution of future symbolic-search
returns.

\paragraph{AlphaGen.}
AlphaGen~\citep{yu2023generating} is the pool-synergy reinforcement-learning
baseline. We retain its PPO-based symbolic search and pool optimization from the
reference implementation; only the shared benchmark interface and the training
budget specified in Appendix~\ref{app:protocol} are imposed.

\paragraph{AlphaQCM.}
AlphaQCM~\citep{zhu2025alphaqcm} is the distributional reinforcement-learning
baseline. Its released IQN/QCM learner and uncertainty-guided action scoring
are retained, while it operates on the common symbolic environment and pool
protocol used in our benchmark.

\paragraph{AlphaSAGE.}
AlphaSAGE~\citep{chen2026alphasage} is the structure- and diversity-aware
formula-discovery baseline. We retain its graph-based expression representation
and GFlowNet-oriented search components from the public implementation under the
same benchmark protocol.

\subsection{Computational Environment}
\label{app:compute}

All reported experiments were executed locally on CPU using an Apple Mac mini
with an Apple M4 processor, 16\,GB unified memory, and a 256\,GB SSD. The
reproducibility environment uses Python~3.12.13, PyTorch~2.13.0,
Qlib~0.9.8.dev32, LightGBM~4.7.0, scikit-learn~1.9.0, NumPy~2.5.1, and
pandas~2.3.3.

\section{Additional Results}
\label{app:additional_results}

 \subsection{Ablation Study}
\label{app:ablation}

Table~\ref{tab:ablation_csi300_seed0} isolates the three components highlighted in the main text. The w/o SDE Return Critic variant replaces the SDE return-distribution critic with the QCM/IQN return critic while retaining event-conditioned memory, and produces the largest decline in both IC and ICIR. The w/o Persistent Memory variant removes recurrent use of historical memory and also lowers both metrics. The w/o Distributional Loss variant removes the energy-distance term while retaining the SDE dynamics and mean-calibration objective; it causes a moderate reduction in IC and a larger decrease in ICIR. Overall, the full model achieves the highest IC and ICIR, supporting the complementary roles of the SDE return critic, distributional supervision, and persistent memory.

\begin{table}[H]
\centering
\caption{
Ablation study of \textbf{AlphaRJM} on CSI300 using random seed 0.
Higher values are better. Best and second-best results are shown in
\textbf{bold} and \underline{underlined}, respectively.
}
\label{tab:ablation_csi300_seed0}

\footnotesize
\setlength{\tabcolsep}{7pt}
\renewcommand{\arraystretch}{1.12}

\begin{tabular}{lcc}
\toprule
\textbf{Model} & \textbf{IC} & \textbf{ICIR} \\
\midrule

\rowcolor{oursbg}
\textbf{AlphaRJM}
& $\mathbf{0.0480}$
& $\mathbf{0.3700}$ \\

w/o SDE Return Critic
& $0.0362$
& $0.2687$ \\

w/o Distributional Loss
& $\underline{0.0453}$
& $\underline{0.3085}$ \\

w/o Persistent Memory
& $0.0434$
& $0.3053$ \\

\bottomrule
\end{tabular}
\end{table}

\subsection{Additional Results on CSI1000}
\label{app:csi1000_results}

To further evaluate AlphaRJM on a broader equity universe, we report
additional results on CSI1000 using the same chronological data split,
prediction target, evaluation metrics, and controlled comparison protocol
described in Appendix~\ref{app:protocol}. Results are reported over
random seeds 0 and 1.
As shown in Table~\ref{tab:csi1000_results}, AlphaRJM achieves the
highest IC and RIC on CSI1000, reaching
$0.076\pm0.003$ and $0.102\pm0.009$, respectively.
It also obtains the second-highest RICIR
($0.648\pm0.013$), while AlphaQCM and LightGBM achieve the strongest
ICIR and RICIR, respectively. These results provide additional evidence
that the predictive advantage of AlphaRJM is retained when the evaluation
is extended to the larger CSI1000 universe, particularly for the
cross-sectional correlation metrics IC and RIC.

\begin{table}[H]
\centering
\caption{
Performance comparison on CSI1000 using correlation-based evaluation metrics.
Results are reported as mean $\pm$ standard deviation over two random seeds
(0--1). Higher values are better for all metrics. Best and second-best
performances are shown in \textbf{bold} and \underline{underlined},
respectively.
}
\label{tab:csi1000_results}

\footnotesize
\setlength{\tabcolsep}{4.0pt}
\renewcommand{\arraystretch}{1.12}

\begin{tabular}{lcccc}
\toprule
\textbf{Method}
& \textbf{IC}
& \textbf{ICIR}
& \textbf{RIC}
& \textbf{RICIR} \\
\midrule

MLP
& $0.049 \pm 0.004$
& $0.464 \pm 0.034$
& $0.063 \pm 0.008$
& $0.556 \pm 0.076$ \\

LightGBM
& $0.053 \pm 0.000$
& $\underline{0.556 \pm 0.006}$
& $0.063 \pm 0.000$
& $\mathbf{0.668 \pm 0.027}$ \\

Neural ODE
& $0.054 \pm 0.001$
& $0.513 \pm 0.020$
& $0.066 \pm 0.002$
& $0.585 \pm 0.008$ \\

Neural SDE
& $0.054 \pm 0.000$
& $0.510 \pm 0.000$
& $0.063 \pm 0.001$
& $0.574 \pm 0.006$ \\

AlphaGen
& $\underline{0.072 \pm 0.010}$
& $0.513 \pm 0.038$
& $\underline{0.089 \pm 0.011}$
& $0.627 \pm 0.035$ \\

AlphaQCM
& $0.070 \pm 0.008$
& $\mathbf{0.563 \pm 0.002}$
& $0.078 \pm 0.016$
& $0.610 \pm 0.080$ \\

AlphaSAGE
& $0.056 \pm 0.007$
& $0.517 \pm 0.032$
& $0.068 \pm 0.004$
& $0.579 \pm 0.018$ \\

\rowcolor{oursbg}
\textbf{AlphaRJM (ours)}
& $\mathbf{0.076 \pm 0.003}$
& $0.505 \pm 0.003$
& $\mathbf{0.102 \pm 0.009}$
& $\underline{0.648 \pm 0.013}$ \\

\bottomrule
\end{tabular}
\end{table}

\subsection{Long-Horizon Cumulative Wealth on CSI500.}
\label{app:lg}
Figure~\ref{fig:csi500_h42_cumulative_wealth} shows the cumulative wealth
trajectories on CSI500 for the $h=42$ forecasting horizon using random
seed~0. AlphaRJM maintains the highest wealth trajectory for most of the
test period and shows a strong recovery following the market decline in
mid-2024. By the end of the evaluation period, AlphaRJM remains above the
other learned strategies, while the CSI500 index remains substantially
below its initial level. Together with the corresponding IC and RIC
results, this visualization provides additional evidence that the
long-horizon predictive signals discovered by AlphaRJM translate into
persistent cross-sectional portfolio performance. Since the figure is
based on a single random seed, it is presented as a complementary
illustration rather than as a separate statistical comparison.

\begin{figure}[H]
    \centering
    \includegraphics[width=\linewidth]
    {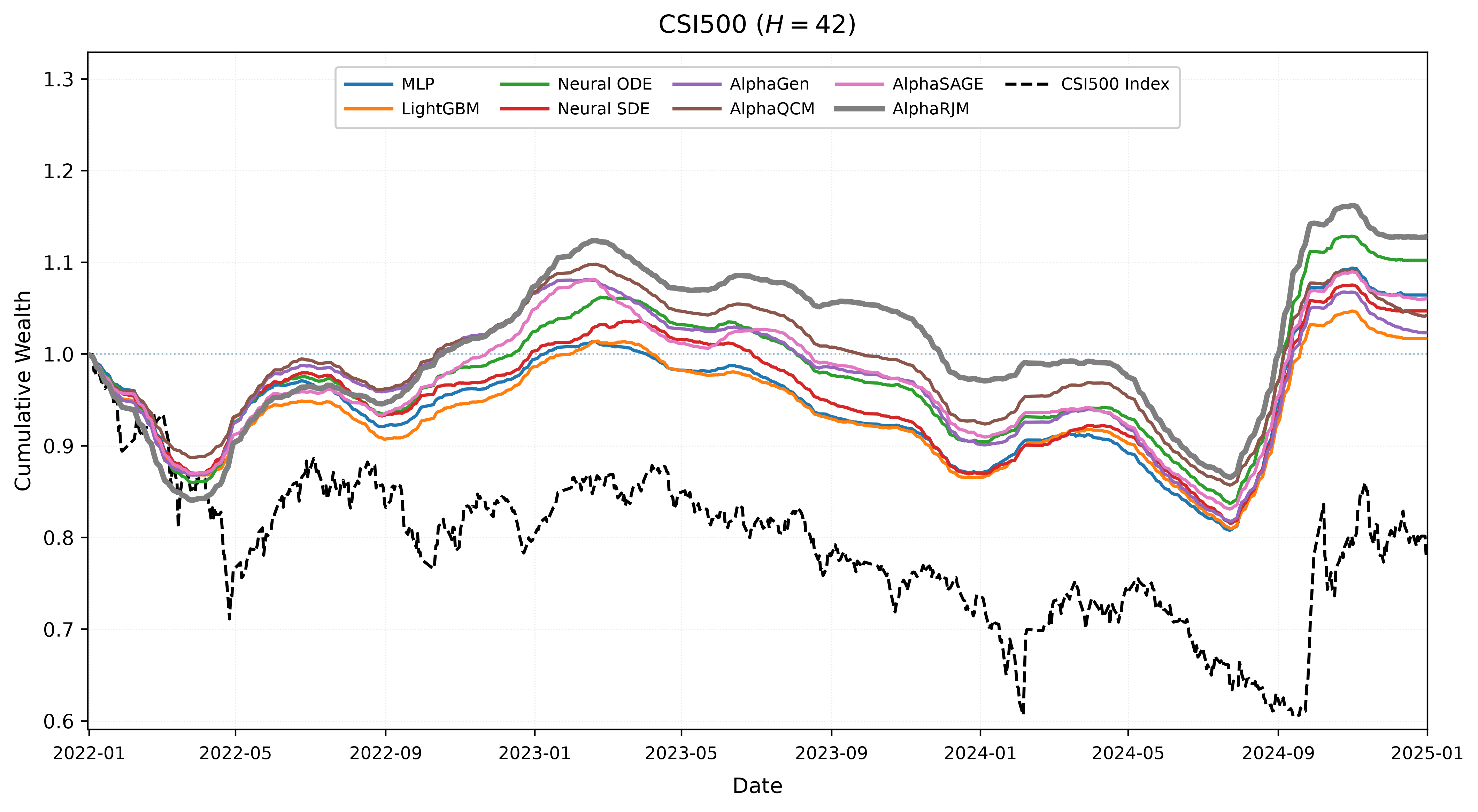}
    \caption{
    Cumulative wealth curves on CSI500 for the $h=42$ forecasting horizon
    using random seed 0.
    }
    \label{fig:csi500_h42_cumulative_wealth}
\end{figure}

\section{Theoretical Properties of AlphaRJM}
\label{app:theory}

This appendix establishes theoretical properties of the two central
components of AlphaRJM. We first show that Reward-Jump Memory is exactly
constant between terminal evaluation events and has bounded variation over
finite search trajectories. We then characterize the conditional law of the
action-conditioned SDE return critic using classical results for linear
stochastic differential equations
~\citep{oksendal2013stochastic,mao2007stochastic}. Finally, we derive the
moments of the finite-step Euler--Maruyama particles used by the
implementation~\citep{higham2001algorithmic} and establish consistency of
their empirical statistics using the strong law of large numbers
~\citep{durrett2019probability}.

\subsection{Event-Driven Memory Dynamics}

The following result formalizes the event-driven persistence of
Reward-Jump Memory and bounds its variation over a finite search trajectory.
Here $\|\cdot\|_{\infty}$ denotes the vector maximum norm.

\begin{proposition}[Event-Driven Persistence and Bounded Memory Variation]
\label{prop:rjm_persistence}

For Reward-Jump Memory defined by
Eqs.~\ref{eq:rjm_jump}--\ref{eq:rjm_update},
\begin{equation}
    H_{t+1}=H_t
    \qquad\text{whenever}\qquad
    \delta_t=0.
    \label{eq:rjm_no_event}
\end{equation}
Whenever $\delta_t=1$,
\begin{equation}
    \|H_{t+1}-H_t\|_{\infty}
    \leq
    J_{\max}.
    \label{eq:rjm_single_jump_bound}
\end{equation}
Moreover, for every integer $T\geq1$, the cumulative variation satisfies
\begin{equation}
    \sum_{t=0}^{T-1}
    \|H_{t+1}-H_t\|_{\infty}
    \leq
    J_{\max}
    \sum_{t=0}^{T-1}\delta_t .
    \label{eq:rjm_variation_bound}
\end{equation}
Consequently,
\begin{equation}
    \|H_T-H_0\|_{\infty}
    \leq
    J_{\max}
    \sum_{t=0}^{T-1}\delta_t .
    \label{eq:rjm_displacement_bound}
\end{equation}
Thus Reward-Jump Memory is exactly constant between terminal evaluation
events, while both its cumulative variation and net displacement over any
finite search prefix are controlled by the number of such events.
\end{proposition}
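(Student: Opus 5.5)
The argument is a direct case analysis on the update rule Eq.~\ref{eq:rjm_update}, followed by summation and the triangle inequality.

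\textbf{Step 1 (no event).} If $\delta_t=0$, the second branch of Eq.~\ref{eq:rjm_update} gives $H_{t+1}=H_t$, which is Eq.~\ref{eq:rjm_no_event}.

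\textbf{Step 2 (one event).} If $\delta_t=1$, then $H_{t+1}-H_t=J_t$. I bound $J_t$ coordinatewise using Eq.~\ref{eq:rjm_jump}. For each coordinate $i$ we have $g_{t,i}=\operatorname{sigmoid}(\cdot)\in(0,1)$ and $\zeta_{t,i}=\tanh(\cdot)\in(-1,1)$. Hence
\[
|J_{t,i}|=J_{\max}\,|g_{t,i}|\,|\zeta_{t,i}|<J_{\max}.
\]
Taking the maximum over coordinates gives $\|H_{t+1}-H_t\|_\infty\le J_{\max}$, which is Eq.~\ref{eq:rjm_single_jump_bound}. The bound holds whatever the values of $u_t$, $\chi_t$, and the parameters $\eta$, because only the ranges of sigmoid and $\tanh$ are used.

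\textbf{Step 3 (cumulative variation).} Steps 1 and 2 combine into the single inequality
\[
\|H_{t+1}-H_t\|_\infty\le J_{\max}\delta_t \quad\text{for every } t.
\]
When $\delta_t=0$ both sides are zero; when $\delta_t=1$ this is Step 2. Summing over $t=0,\dots,T-1$ gives Eq.~\ref{eq:rjm_variation_bound}.

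\textbf{Step 4 (net displacement).} Write $H_T-H_0=\sum_{t=0}^{T-1}(H_{t+1}-H_t)$ as a telescoping sum. The triangle inequality for $\|\cdot\|_\infty$ bounds its norm by the cumulative variation, and Step 3 then gives Eq.~\ref{eq:rjm_displacement_bound}.

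The only point needing care is Step 2. The inequality must hold for every realized argument of the jump heads, so it has to rely only on the ranges of sigmoid and $\tanh$ and the fact that $J_{\max}>0$. It must not depend on any property of $\mathcal C_\eta$ or $\mathcal M_\eta$. For the same reason, reconstructing the memory under current parameters during replay leaves the bound unaffected. Everything else is bookkeeping.
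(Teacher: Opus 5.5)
Your proposal is correct and follows the paper's proof essentially step for step. Like the paper, you split into cases on the update rule, bound $J_t$ coordinatewise using only the ranges of sigmoid and $\tanh$, combine the cases into $\|H_{t+1}-H_t\|_\infty\le J_{\max}\delta_t$ and sum over $t$, and then obtain the displacement bound from the telescoping sum and the triangle inequality.
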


\begin{proof}
When $\delta_t=0$, Eq.~\ref{eq:rjm_update} gives
\[
    H_{t+1}=H_t,
\]
and therefore
\[
    H_{t+1}-H_t=0,
\]
which proves Eq.~\ref{eq:rjm_no_event}.

Now suppose that $\delta_t=1$. Then
Eq.~\ref{eq:rjm_update} gives
\[
    H_{t+1}-H_t=J_t.
\]
From Eq.~\ref{eq:rjm_jump},
\[
    J_t
    =
    J_{\max}\,g_t\odot\zeta_t.
\]
Since $g_t$ is obtained componentwise through a sigmoid function and
$\zeta_t$ through a hyperbolic tangent, for every coordinate $i$,
\[
    0<(g_t)_i<1,
    \qquad
    -1<(\zeta_t)_i<1.
\]
Hence
\[
    \|g_t\|_{\infty}\leq1,
    \qquad
    \|\zeta_t\|_{\infty}\leq1.
\]
Moreover,
\[
\begin{aligned}
    \|g_t\odot\zeta_t\|_{\infty}
    &=
    \max_i
    \left|
        (g_t)_i(\zeta_t)_i
    \right| \\
    &\leq
    \left(
        \max_i |(g_t)_i|
    \right)
    \left(
        \max_i |(\zeta_t)_i|
    \right) \\
    &=
    \|g_t\|_{\infty}
    \|\zeta_t\|_{\infty}
    \leq 1.
\end{aligned}
\]
Therefore,
\[
\begin{aligned}
    \|J_t\|_{\infty}
    &=
    J_{\max}
    \|g_t\odot\zeta_t\|_{\infty} \\
    &\leq
    J_{\max}.
\end{aligned}
\]
Since $H_{t+1}-H_t=J_t$ when $\delta_t=1$, we obtain
\[
    \|H_{t+1}-H_t\|_{\infty}
    \leq
    J_{\max},
\]
which proves Eq.~\ref{eq:rjm_single_jump_bound}.

Combining the cases $\delta_t=0$ and $\delta_t=1$ gives, for every $t$,
\[
    \|H_{t+1}-H_t\|_{\infty}
    \leq
    J_{\max}\delta_t.
\]
Summing over $t=0,\ldots,T-1$ yields
\[
\begin{aligned}
    \sum_{t=0}^{T-1}
    \|H_{t+1}-H_t\|_{\infty}
    &\leq
    J_{\max}
    \sum_{t=0}^{T-1}\delta_t,
\end{aligned}
\]
which establishes
Eq.~\ref{eq:rjm_variation_bound}.

Finally, by telescoping,
\[
    H_T-H_0
    =
    \sum_{t=0}^{T-1}
    (H_{t+1}-H_t).
\]
Applying the triangle inequality for the $\ell_\infty$ norm,
\[
\begin{aligned}
    \|H_T-H_0\|_{\infty}
    &=
    \left\|
        \sum_{t=0}^{T-1}
        (H_{t+1}-H_t)
    \right\|_{\infty} \\
    &\leq
    \sum_{t=0}^{T-1}
    \|H_{t+1}-H_t\|_{\infty} \\
    &\leq
    J_{\max}
    \sum_{t=0}^{T-1}\delta_t.
\end{aligned}
\]
Thus
Eq.~\ref{eq:rjm_displacement_bound}
follows.
\end{proof}

Proposition~\ref{prop:rjm_persistence} formalizes the separation between
termination of an individual formula episode and termination of the longer
memory stream. A terminal evaluation may modify $H_t$, but the resulting
state is retained while subsequent formulas are constructed.

\subsection{SDE Return Critic}

For a fixed search state $s_t$ and available action $a\in\mathcal A_t$,
the process in Eq.~\ref{eq:return_sde} is a scalar linear
mean-reverting diffusion. The following result specializes classical
linear-SDE theory~\citep{oksendal2013stochastic,mao2007stochastic}
to the return critic used by AlphaRJM.

\begin{theorem}[Well-Posedness and Conditional Law of the SDE Return Critic]
\label{thm:return_sde_law}

For any fixed search state $s_t$ and available action
$a\in\mathcal A_t$, the SDE return critic in
Eq.~\ref{eq:return_sde} admits a unique strong solution on
$\tau\in[0,1]$. The solution is
\begin{equation}
\begin{aligned}
    Z_{\tau}^{t,a}
    ={}&
    \mu_{t,a}
    +
    \bigl(
        z_{0,t,a}-\mu_{t,a}
    \bigr)
    e^{-\kappa_{t,a}\tau}
    \\
    &+
    \sigma_{t,a}
    \int_0^\tau
    e^{-\kappa_{t,a}(\tau-\tau')}
    \,dB_{\tau'} .
\end{aligned}
\label{eq:return_sde_exact_solution}
\end{equation}
Consequently,
\begin{equation}
\mathbb E
\left[
    Z_{\tau}^{t,a}
    \mid s_t,a
\right]
=
\mu_{t,a}
+
\bigl(
    z_{0,t,a}-\mu_{t,a}
\bigr)
e^{-\kappa_{t,a}\tau},
\label{eq:return_sde_conditional_mean}
\end{equation}
and
\begin{equation}
\operatorname{Var}
\left(
    Z_{\tau}^{t,a}
    \mid s_t,a
\right)
=
\frac{\sigma_{t,a}^{2}}
     {2\kappa_{t,a}}
\left(
    1-e^{-2\kappa_{t,a}\tau}
\right).
\label{eq:return_sde_conditional_variance}
\end{equation}
For every $\tau\in(0,1]$,
\begin{equation}
\begin{aligned}
Z_{\tau}^{t,a}\mid(s_t,a)
\sim
\mathcal N
\Bigg(
&
\mu_{t,a}
+
\bigl(
    z_{0,t,a}-\mu_{t,a}
\bigr)
e^{-\kappa_{t,a}\tau},
\\
&
\frac{\sigma_{t,a}^{2}}
     {2\kappa_{t,a}}
\left(
    1-e^{-2\kappa_{t,a}\tau}
\right)
\Bigg).
\end{aligned}
\label{eq:return_sde_conditional_law}
\end{equation}
At $\tau=0$, the process reduces to the deterministic initial value
$Z_0^{t,a}=z_{0,t,a}$. In particular, conditional on $(s_t,a)$,
$Z_{\tau}^{t,a}$ has finite moments of every finite order for all
$\tau\in[0,1]$.
\end{theorem}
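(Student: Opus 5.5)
Conditional on $(s_t,a)$, the four quantities $z_{0,t,a}$, $\mu_{t,a}$, $\kappa_{t,a}$, $\sigma_{t,a}$ are deterministic constants produced by the conditioning network, with $\kappa_{t,a}\geq 10^{-4}>0$ by Eq.~\ref{eq:kappa_parameterization} and $\sigma_{t,a}\in[\sigma_{\min},\sigma_{\max}]$ by Eq.~\ref{eq:sigma_parameterization}. The plan is to treat Eq.~\ref{eq:return_sde} as a scalar Ornstein--Uhlenbeck equation with constant coefficients and argue in four steps: existence/uniqueness, explicit solution, Gaussian law, and moments.

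\textbf{Well-posedness.} The drift $b(z)=\kappa_{t,a}(\mu_{t,a}-z)$ is globally Lipschitz with constant $\kappa_{t,a}$ and has linear growth. The diffusion coefficient $\sigma_{t,a}$ is constant. The initial value is deterministic, hence square integrable. The standard existence and uniqueness theorem for SDEs with Lipschitz and linear-growth coefficients \citep{oksendal2013stochastic,mao2007stochastic} therefore gives a unique strong solution on $[0,1]$.

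\textbf{Explicit solution.} Apply It\^o's formula to $Y_\tau=e^{\kappa_{t,a}\tau}Z_\tau^{t,a}$. Because $e^{\kappa\tau}$ is deterministic and $C^1$, there is no second-order correction, and
\[
dY_\tau=\kappa_{t,a}\mu_{t,a}e^{\kappa_{t,a}\tau}\,d\tau+\sigma_{t,a}e^{\kappa_{t,a}\tau}\,dB_\tau .
\]
Integrating from $0$ to $\tau$ and multiplying by $e^{-\kappa_{t,a}\tau}$ yields Eq.~\ref{eq:return_sde_exact_solution}. Since the solution is unique, this process is the strong solution.

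\textbf{Moments and law.} The stochastic integral $I_\tau=\int_0^\tau e^{-\kappa(\tau-\tau')}\,dB_{\tau'}$ has a deterministic, square-integrable integrand. It is therefore a Wiener integral, which is centered Gaussian. Its mean is zero because It\^o integrals of $L^2$ integrands have zero mean. By the It\^o isometry, its variance is
\[
\int_0^\tau e^{-2\kappa(\tau-\tau')}\,d\tau'=\frac{1-e^{-2\kappa\tau}}{2\kappa},
\]
which is finite because $\kappa>0$. Adding the deterministic part gives Eqs.~\ref{eq:return_sde_conditional_mean}--\ref{eq:return_sde_conditional_variance}. For $\tau\in(0,1]$ the variance is strictly positive, since $\sigma_{t,a}\geq\sigma_{\min}>0$ and $1-e^{-2\kappa\tau}>0$. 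This gives the nondegenerate normal law in Eq.~\ref{eq:return_sde_conditional_law}. At $\tau=0$ the integral vanishes and $Z_0=z_{0,t,a}$. Finite moments of all orders follow because Gaussian and deterministic variables have them.

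\textbf{Main obstacle.} The only delicate point is justifying that $I_\tau$ is Gaussian rather than merely having the stated first two moments. I would show this by approximating the deterministic integrand with step functions. Each approximating integral is a finite linear combination of independent Brownian increments, hence Gaussian. These integrals converge in $L^2$, and $L^2$ limits of centered Gaussians are Gaussian, which can be seen through convergence of characteristic functions. Alternatively, I would cite the standard fact that Wiener integrals of deterministic functions are Gaussian.
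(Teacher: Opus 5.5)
Your proposal is correct and takes essentially the same route as the paper. Both arguments use Lipschitz/linear-growth well-posedness, the integrating factor $e^{\kappa_{t,a}\tau}$ via It\^o's formula, and the Gaussian Wiener integral with the It\^o isometry; your additions (positivity of the variance from $\sigma_{\min}>0$, and the step-function argument for Gaussianity) simply make explicit what the paper asserts directly.
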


\begin{proof}
Fix $(s_t,a)$. Conditional on this state--action pair, the critic outputs
$z_{0,t,a}$, $\mu_{t,a}$, $\kappa_{t,a}$, and $\sigma_{t,a}$ are fixed
finite scalars. By construction,
\[
    \kappa_{t,a}>0,
    \qquad
    0<\sigma_{\min}
    \leq
    \sigma_{t,a}
    \leq
    \sigma_{\max}<\infty.
\]

Define the drift and diffusion coefficients by
\[
    b_{t,a}(z)
    =
    \kappa_{t,a}(\mu_{t,a}-z),
    \qquad
    \varsigma_{t,a}(z)
    =
    \sigma_{t,a}.
\]
For any $z,z'\in\mathbb R$,
\[
\begin{aligned}
    |b_{t,a}(z)-b_{t,a}(z')|
    &=
    \kappa_{t,a}|z-z'|,
\end{aligned}
\]
so the drift is globally Lipschitz. Moreover,
\[
    |b_{t,a}(z)|
    \leq
    \kappa_{t,a}
    \bigl(
        |\mu_{t,a}|+|z|
    \bigr),
\]
and hence it satisfies a linear-growth bound. Since the diffusion
coefficient is constant in $z$,
\[
    |\varsigma_{t,a}(z)-\varsigma_{t,a}(z')|=0,
\]
and its boundedness implies the corresponding linear-growth condition.
Therefore, standard existence and uniqueness results for stochastic
differential equations imply that
Eq.~\ref{eq:return_sde} admits a unique strong solution on
$[0,1]$~\citep{oksendal2013stochastic,mao2007stochastic}.

To obtain the solution explicitly, rewrite
Eq.~\ref{eq:return_sde} as
\[
    dZ_{\tau}^{t,a}
    +
    \kappa_{t,a}Z_{\tau}^{t,a}\,d\tau
    =
    \kappa_{t,a}\mu_{t,a}\,d\tau
    +
    \sigma_{t,a}\,dB_{\tau}.
\]
For fixed $(s_t,a)$, $\kappa_{t,a}$ is constant with respect to the
internal diffusion coordinate $\tau$. Using the integrating factor
$e^{\kappa_{t,a}\tau}$ and It\^o's product rule,
\[
\begin{aligned}
d\!\left(
    e^{\kappa_{t,a}\tau}Z_{\tau}^{t,a}
\right)
&=
e^{\kappa_{t,a}\tau}\,dZ_{\tau}^{t,a}
+
\kappa_{t,a}
e^{\kappa_{t,a}\tau}
Z_{\tau}^{t,a}\,d\tau \\
&=
\kappa_{t,a}\mu_{t,a}
e^{\kappa_{t,a}\tau}\,d\tau
+
\sigma_{t,a}
e^{\kappa_{t,a}\tau}\,dB_{\tau},
\end{aligned}
\]
where the terms involving $Z_{\tau}^{t,a}$ cancel.

Integrating over $[0,\tau]$ and using
$Z_0^{t,a}=z_{0,t,a}$ gives
\[
\begin{aligned}
e^{\kappa_{t,a}\tau}Z_{\tau}^{t,a}
&=
z_{0,t,a}
+
\kappa_{t,a}\mu_{t,a}
\int_0^\tau
e^{\kappa_{t,a}\tau'}\,d\tau' \\
&\quad+
\sigma_{t,a}
\int_0^\tau
e^{\kappa_{t,a}\tau'}\,dB_{\tau'} \\
&=
z_{0,t,a}
+
\mu_{t,a}
\left(
    e^{\kappa_{t,a}\tau}-1
\right)
+
\sigma_{t,a}
\int_0^\tau
e^{\kappa_{t,a}\tau'}\,dB_{\tau'}.
\end{aligned}
\]
Multiplying by $e^{-\kappa_{t,a}\tau}$ and rearranging yields
\[
\begin{aligned}
Z_{\tau}^{t,a}
&=
\mu_{t,a}
+
\left(
    z_{0,t,a}-\mu_{t,a}
\right)
e^{-\kappa_{t,a}\tau} \\
&\quad+
\sigma_{t,a}
\int_0^\tau
e^{-\kappa_{t,a}(\tau-\tau')}
\,dB_{\tau'},
\end{aligned}
\]
which is
Eq.~\ref{eq:return_sde_exact_solution}.

Conditional on $(s_t,a)$, the integrand
$e^{-\kappa_{t,a}(\tau-\tau')}$ is deterministic. Hence
\[
    \int_0^\tau
    e^{-\kappa_{t,a}(\tau-\tau')}
    \,dB_{\tau'}
\]
is a Gaussian random variable with mean zero. Taking conditional
expectation in
Eq.~\ref{eq:return_sde_exact_solution} therefore gives
\[
    \mathbb E
    \left[
        Z_{\tau}^{t,a}
        \mid s_t,a
    \right]
    =
    \mu_{t,a}
    +
    \left(
        z_{0,t,a}-\mu_{t,a}
    \right)
    e^{-\kappa_{t,a}\tau},
\]
which proves
Eq.~\ref{eq:return_sde_conditional_mean}.

Since the deterministic terms do not contribute to the conditional
variance, It\^o isometry gives
\[
\begin{aligned}
\operatorname{Var}
\left(
    Z_{\tau}^{t,a}\mid s_t,a
\right)
&=
\sigma_{t,a}^{2}
\int_0^\tau
e^{-2\kappa_{t,a}(\tau-\tau')}
\,d\tau' \\
&=
\sigma_{t,a}^{2}
\int_0^\tau
e^{-2\kappa_{t,a}u}\,du \\
&=
\frac{\sigma_{t,a}^{2}}
     {2\kappa_{t,a}}
\left(
    1-e^{-2\kappa_{t,a}\tau}
\right),
\end{aligned}
\]
where $u=\tau-\tau'$. This proves
Eq.~\ref{eq:return_sde_conditional_variance}.

Finally, conditional on $(s_t,a)$,
Eq.~\ref{eq:return_sde_exact_solution} is the sum of deterministic
terms and a Gaussian It\^o integral. Therefore
$Z_{\tau}^{t,a}\mid(s_t,a)$ is Gaussian with the mean and variance derived
above, which establishes
Eq.~\ref{eq:return_sde_conditional_law}. For $\tau>0$, all finite
moments follow from Gaussianity, while at $\tau=0$ the process equals the
deterministic initial value $z_{0,t,a}$.
\end{proof}

Theorem~\ref{thm:return_sde_law} gives a direct interpretation of the
critic parameters. Conditional on $(s_t,a)$, the mean moves from
$z_{0,t,a}$ toward $\mu_{t,a}$ at a rate controlled by
$\kappa_{t,a}$, while $\sigma_{t,a}$ determines stochastic dispersion.
These quantities characterize the internal reinforcement-learning return
used for symbolic search and do not describe an asset-price process.

\subsection{Euler--Maruyama Particles and Action Scoring}

The implementation samples the return process using the Euler--Maruyama
scheme in Eq.~\ref{eq:em_update}
~\citep{higham2001algorithmic}. The next result characterizes the
finite-$K$ terminal particles actually used in the action score
of Eq.~\ref{eq:action_score}.

\begin{corollary}[Euler--Maruyama Particle Moments and Score Consistency]
\label{cor:em_particle_consistency}

Fix a search state $s_t$, an available action
$a\in\mathcal A_t$, formula episode $n$ (and hence $c_n$), and an
integer number of Euler--Maruyama integration steps $K\geq1$.
Under this fixed conditioning, consider independent Euler--Maruyama
trajectories generated according to Eq.~\ref{eq:em_update}, and let
$Z_{t,a}^{(m)}=Z_K^{(m)}$ denote the terminal state of trajectory $m$.
Then the terminal particles are independent and identically distributed,
and each is conditionally Gaussian with
\begin{equation}
\mathbb E
\left[
    Z_{t,a}^{(m)}
    \mid s_t,a
\right]
=
\mu_{t,a}
+
\bigl(
    z_{0,t,a}-\mu_{t,a}
\bigr)
\left(
    1-\frac{\kappa_{t,a}}{K}
\right)^K,
\label{eq:em_particle_mean}
\end{equation}
and
\begin{equation}
\operatorname{Var}
\left(
    Z_{t,a}^{(m)}
    \mid s_t,a
\right)
=
\frac{\sigma_{t,a}^{2}}{K}
\sum_{k=0}^{K-1}
\left(
    1-\frac{\kappa_{t,a}}{K}
\right)^{2k}.
\label{eq:em_particle_variance}
\end{equation}

For the particle statistics
$\overline Z_{t,a}$ and $\widehat V_{t,a}$ defined in
Eq.~\ref{eq:particle_statistics},
\begin{equation}
    \overline Z_{t,a}
    \xrightarrow[M\to\infty]{\mathrm{a.s.}}
    \mathbb E
    \left[
        Z_{t,a}^{(m)}
        \mid s_t,a
    \right],
    \label{eq:particle_mean_consistency}
\end{equation}
and
\begin{equation}
    \widehat V_{t,a}
    \xrightarrow[M\to\infty]{\mathrm{a.s.}}
    \operatorname{Var}
    \left(
        Z_{t,a}^{(m)}
        \mid s_t,a
    \right).
    \label{eq:particle_variance_consistency}
\end{equation}
Consequently,
\begin{equation}
\begin{aligned}
S_t(a)
\xrightarrow[M\to\infty]{\mathrm{a.s.}}
{}&
\mu_{t,a}
+
\bigl(
    z_{0,t,a}-\mu_{t,a}
\bigr)
\left(
    1-\frac{\kappa_{t,a}}{K}
\right)^K
\\
&+
c_n
\Bigg[
    \frac{\sigma_{t,a}^{2}}{K}
    \sum_{k=0}^{K-1}
    \left(
        1-\frac{\kappa_{t,a}}{K}
    \right)^{2k}
    +
    \varepsilon_{\mathrm{num}}
\Bigg]^{1/2}.
\end{aligned}
\label{eq:action_score_limit}
\end{equation}
\end{corollary}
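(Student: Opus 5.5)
Fix $(s_t,a)$ and write $\kappa,\mu,\sigma,z_0$ for the critic outputs, which are deterministic finite scalars under this conditioning, as in the proof of Theorem~\ref{thm:return_sde_law}. Set $\rho:=1-\kappa/K$ and $\Delta\tau=1/K$. The argument works directly with the Euler--Maruyama recursion in Eq.~\ref{eq:em_update}, not with the continuous solution.

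First, center the recursion at $\mu$. With $D_k^{(m)}:=Z_k^{(m)}-\mu$, Eq.~\ref{eq:em_update} becomes the linear AR(1) recursion
\[
D_{k+1}^{(m)}=\rho\,D_k^{(m)}+\frac{\sigma}{\sqrt K}\,\varepsilon_k^{(m)},
\qquad D_0^{(m)}=z_0-\mu .
\]
Unrolling by induction on $k$ gives
\[
Z_K^{(m)}=\mu+(z_0-\mu)\rho^K+\frac{\sigma}{\sqrt K}\sum_{k=0}^{K-1}\rho^{K-1-k}\varepsilon_k^{(m)} .
\]
This exhibits $Z_K^{(m)}$ as a deterministic affine function of the Gaussian vector $(\varepsilon_0^{(m)},\dots,\varepsilon_{K-1}^{(m)})$, so it is conditionally Gaussian.

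Next, read off the moments. The mean follows from $\mathbb E[\varepsilon_k^{(m)}]=0$ and gives Eq.~\ref{eq:em_particle_mean}. Independence of the $\varepsilon_k^{(m)}$ across $k$ makes the variance of the noise sum equal to $\frac{\sigma^2}{K}\sum_{k=0}^{K-1}\rho^{2(K-1-k)}$; reindexing $j=K-1-k$ gives Eq.~\ref{eq:em_particle_variance}. For i.i.d.-ness across particles, note that each $Z_K^{(m)}$ is the same measurable function of its own block $(\varepsilon_k^{(m)})_k$. The blocks are i.i.d. across $m$ by the sampling assumption in Appendix~\ref{app:sde_details}, so the particles are i.i.d. No sign condition on $\rho$ is needed, since only $\rho^K$ and $\rho^{2k}$ appear.

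Finally, the consistency claims. Gaussianity gives finite second moments, so the strong law of large numbers gives $\overline Z_{t,a}\to\mathbb E[Z^{(m)}_{t,a}\mid s_t,a]$ a.s. For $\widehat V_{t,a}$, write
\[
\widehat V_{t,a}=\frac1M\sum_m (Z^{(m)}_{t,a})^2-\overline Z_{t,a}^{\,2}.
\]
Apply the SLLN to the squares (finite first moment, since fourth moments exist) and use continuity of $x\mapsto x^2$ to get convergence to $\mathbb E[Z^2]-(\mathbb E Z)^2$, which is the variance. For the score, $x\mapsto x+c_n\sqrt{y+\varepsilon_{\mathrm{num}}}$ is continuous on $\mathbb R\times[0,\infty)$, and $\widehat V_{t,a}\ge0$ always, so the continuous mapping theorem for a.s. convergence (intersecting the two probability-one events) gives Eq.~\ref{eq:action_score_limit} after substituting the moments.

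The main obstacle I expect is bookkeeping rather than conceptual. Some care is needed to state precisely that "conditional on $(s_t,a)$" means the network outputs are frozen and the only randomness is the fresh noise $\varepsilon$, so the SLLN is applied to a genuinely i.i.d. sequence indexed by $M$. With that in place, each step is an elementary computation.
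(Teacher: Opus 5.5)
Your proposal is correct and follows essentially the same route as the paper's proof. Both arguments center the Euler--Maruyama recursion at $\mu_{t,a}$ to obtain the AR(1) form with $\rho_{t,a}=1-\kappa_{t,a}/K$. They then unroll it to an affine function of independent Gaussian noise, read off the mean and the reindexed variance, and apply the strong law to $Z_{t,a}^{(m)}$ and $(Z_{t,a}^{(m)})^2$, concluding by continuity of $(z,v)\mapsto z+c_n\sqrt{v+\varepsilon_{\mathrm{num}}}$. Your extra remarks (that $\widehat V_{t,a}\ge 0$ keeps the map in its domain, and that the two probability-one events must be intersected) are harmless refinements. Finiteness of $\mathbb E[(Z_{t,a}^{(m)})^2]$ already suffices for the strong law on the squares, so you do not need to appeal to fourth moments.
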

\begin{proof}
Fix $(s_t,a)$ and $K$. Using $\Delta\tau=1/K$ in
Eq.~\ref{eq:em_update}, the Euler--Maruyama recursion becomes
\[
    Z_{k+1}^{(m)}
    =
    Z_k^{(m)}
    +
    \frac{\kappa_{t,a}}{K}
    \left(
        \mu_{t,a}-Z_k^{(m)}
    \right)
    +
    \frac{\sigma_{t,a}}{\sqrt K}
    \varepsilon_k^{(m)},
\]
where
$\varepsilon_k^{(m)}\stackrel{\mathrm{i.i.d.}}{\sim}\mathcal N(0,1)$.
Subtracting $\mu_{t,a}$ from both sides gives
\[
    Z_{k+1}^{(m)}-\mu_{t,a}
    =
    \left(
        1-\frac{\kappa_{t,a}}{K}
    \right)
    \left(
        Z_k^{(m)}-\mu_{t,a}
    \right)
    +
    \frac{\sigma_{t,a}}{\sqrt K}
    \varepsilon_k^{(m)}.
\]
For brevity, define
\[
    \rho_{t,a}
    :=
    1-\frac{\kappa_{t,a}}{K}.
\]
Iterating the recursion from
$Z_0^{(m)}=z_{0,t,a}$ yields
\[
\begin{aligned}
    Z_K^{(m)}-\mu_{t,a}
    &=
    \rho_{t,a}^{K}
    \left(
        z_{0,t,a}-\mu_{t,a}
    \right) \\
    &\quad+
    \frac{\sigma_{t,a}}{\sqrt K}
    \sum_{k=0}^{K-1}
    \rho_{t,a}^{\,K-1-k}
    \varepsilon_k^{(m)}.
\end{aligned}
\]
Equivalently,
\[
\begin{aligned}
    Z_K^{(m)}
    &=
    \mu_{t,a}
    +
    \rho_{t,a}^{K}
    \left(
        z_{0,t,a}-\mu_{t,a}
    \right) \\
    &\quad+
    \frac{\sigma_{t,a}}{\sqrt K}
    \sum_{k=0}^{K-1}
    \rho_{t,a}^{\,K-1-k}
    \varepsilon_k^{(m)}.
\end{aligned}
\]

Conditional on $(s_t,a)$, all coefficients in this expression are fixed.
Since the variables $\varepsilon_k^{(m)}$ are independent standard
Gaussians, their weighted sum is Gaussian. Hence
$Z_K^{(m)}\mid(s_t,a)$ is Gaussian. Moreover,
$\mathbb E[\varepsilon_k^{(m)}]=0$, so
\[
\begin{aligned}
    \mathbb E
    \left[
        Z_K^{(m)}
        \mid s_t,a
    \right]
    &=
    \mu_{t,a}
    +
    \rho_{t,a}^{K}
    \left(
        z_{0,t,a}-\mu_{t,a}
    \right) \\
    &=
    \mu_{t,a}
    +
    \left(
        1-\frac{\kappa_{t,a}}{K}
    \right)^K
    \left(
        z_{0,t,a}-\mu_{t,a}
    \right),
\end{aligned}
\]
which proves Eq.~\ref{eq:em_particle_mean}.

The deterministic terms do not contribute to the conditional variance.
Using independence and
$\operatorname{Var}(\varepsilon_k^{(m)})=1$,
\[
\begin{aligned}
    \operatorname{Var}
    \left(
        Z_K^{(m)}
        \mid s_t,a
    \right)
    &=
    \frac{\sigma_{t,a}^{2}}{K}
    \sum_{k=0}^{K-1}
    \rho_{t,a}^{\,2(K-1-k)} \\
    &=
    \frac{\sigma_{t,a}^{2}}{K}
    \sum_{j=0}^{K-1}
    \rho_{t,a}^{\,2j} \\
    &=
    \frac{\sigma_{t,a}^{2}}{K}
    \sum_{j=0}^{K-1}
    \left(
        1-\frac{\kappa_{t,a}}{K}
    \right)^{2j},
\end{aligned}
\]
where the second equality follows from the reindexing
$j=K-1-k$. This establishes
Equation~(\ref{eq:em_particle_variance}).

For fixed $(s_t,a)$ and $K$, the terminal particles
$Z_{t,a}^{(1)},\ldots,Z_{t,a}^{(M)}$ are conditionally independent and
identically distributed because each is generated from the same
state--action-conditioned critic parameters using an independent noise
trajectory. Their conditional distribution is Gaussian by the preceding
derivation, and therefore
\[
    \mathbb E
    \left[
        \left(
            Z_{t,a}^{(m)}
        \right)^2
        \mid s_t,a
    \right]
    <\infty.
\]
Hence, by the strong law of large numbers
~\citep{durrett2019probability},
\[
    \overline Z_{t,a}
    =
    \frac{1}{M}
    \sum_{m=1}^{M}
    Z_{t,a}^{(m)}
    \xrightarrow{\mathrm{a.s.}}
    \mathbb E
    \left[
        Z_{t,a}^{(m)}
        \mid s_t,a
    \right],
\]
which proves
Eq.~\ref{eq:particle_mean_consistency}.

Since the conditional second moment is finite, the strong law also applies
to $\left(Z_{t,a}^{(m)}\right)^2$, giving
\[
    \frac{1}{M}
    \sum_{m=1}^{M}
    \left(
        Z_{t,a}^{(m)}
    \right)^2
    \xrightarrow{\mathrm{a.s.}}
    \mathbb E
    \left[
        \left(
            Z_{t,a}^{(m)}
        \right)^2
        \mid s_t,a
    \right].
\]
From Eq.~\ref{eq:particle_statistics},
\[
    \widehat V_{t,a}
    =
    \frac{1}{M}
    \sum_{m=1}^{M}
    \left(
        Z_{t,a}^{(m)}
    \right)^2
    -
    \overline Z_{t,a}^{\,2}.
\]
Since
\[
    \overline Z_{t,a}^{\,2}
    \xrightarrow{\mathrm{a.s.}}
    \left(
        \mathbb E
        \left[
            Z_{t,a}^{(m)}
            \mid s_t,a
        \right]
    \right)^2,
\]
it follows that
\[
\begin{aligned}
    \widehat V_{t,a}
    \xrightarrow{\mathrm{a.s.}}
    {}&
    \mathbb E
    \left[
        \left(
            Z_{t,a}^{(m)}
        \right)^2
        \mid s_t,a
    \right] \\
    &-
    \left(
        \mathbb E
        \left[
            Z_{t,a}^{(m)}
            \mid s_t,a
        \right]
    \right)^2 \\
    ={}&
    \operatorname{Var}
    \left(
        Z_{t,a}^{(m)}
        \mid s_t,a
    \right),
\end{aligned}
\]
which proves
Eq.~\ref{eq:particle_variance_consistency}.

Finally, for fixed $n$, define
\[
    f(z,v)
    =
    z+c_n\sqrt{v+\varepsilon_{\mathrm{num}}}.
\]
Since $\varepsilon_{\mathrm{num}}>0$, the mapping
$f:\mathbb R\times[0,\infty)\to\mathbb R$ is continuous. Therefore,
by Eqs.~\ref{eq:particle_mean_consistency} and
(\ref{eq:particle_variance_consistency}) and continuity of $f$,
\[
\begin{aligned}
    S_t(a)
    &=
    f\!\left(
        \overline Z_{t,a},
        \widehat V_{t,a}
    \right) \\
    &\xrightarrow{\mathrm{a.s.}}
    \mathbb E
    \left[
        Z_{t,a}^{(m)}
        \mid s_t,a
    \right]
    +
    c_n
    \sqrt{
        \operatorname{Var}
        \left(
            Z_{t,a}^{(m)}
            \mid s_t,a
        \right)
        +
        \varepsilon_{\mathrm{num}}
    }.
\end{aligned}
\]
Using Eq.~\ref{eq:action_score}, this is precisely
Eq.~\ref{eq:action_score_limit}.
\end{proof}

Corollary~\ref{cor:em_particle_consistency} shows that, for fixed $K$, the empirical particle mean and
variance consistently estimate the corresponding moments of the finite-step
Euler--Maruyama return distribution. Their Monte Carlo estimation errors
vanish almost surely as $M\to\infty$, while the underlying finite-$K$
discretized return process remains unchanged.

\end{document}

%% file: math_commands.tex
\usepackage{amsmath,amsfonts,bm}

\def\eqref#1{equation~\ref{#1}}
\def\1{\bm{1}}

\DeclareMathAlphabet{\mathsfit}{\encodingdefault}{\sfdefault}{m}{sl}
\SetMathAlphabet{\mathsfit}{bold}{\encodingdefault}{\sfdefault}{bx}{n}